\documentclass{article}

\usepackage[utf8]{inputenc} 
\usepackage[margin=1.008in]{geometry}
\usepackage[T1]{fontenc}    
\usepackage{hyperref}       
\usepackage{url}            
\usepackage{booktabs}       
\usepackage{amsfonts}       
\usepackage{nicefrac}       
\usepackage{microtype}      
\usepackage{xcolor}         

\usepackage{tikz}

\title{Online Learning in Contextual \\Second-Price Pay-Per-Click Auctions}

\usepackage[utf8]{inputenc} 
\usepackage[T1]{fontenc}    
\usepackage{hyperref}       
\usepackage{url}            
\usepackage{booktabs}       
\usepackage{amsfonts}       
\usepackage{nicefrac}       
\usepackage{microtype}      
\usepackage{xcolor}         
\usepackage{thmtools}
\usepackage{resizegather}
\usepackage{adjustbox}

\newif\ifspacehack
\usepackage{natbib}
\hypersetup{
    colorlinks = blue,
    breaklinks,
    linkcolor = blue,
    citecolor = blue,
    urlcolor  = blue,
}
\usepackage{url} 
\usepackage{graphicx}
\usepackage{mathtools}
\usepackage{footnote}
\usepackage{float}
\usepackage{xspace}
\usepackage{multirow}
\usepackage{xcolor}
\usepackage{wrapfig}
\usepackage{framed}
\usepackage{bbm}
\usepackage{footnote}
\usepackage{nicefrac}
\usepackage{makecell}
\usepackage[algo2e,noend]{algorithm2e} 
\usepackage{algorithm}
\usepackage{amssymb}
\usepackage{amsthm}
\usepackage{bm}
\makesavenoteenv{tabular}
\makesavenoteenv{table}

\renewcommand{\tilde}{\widetilde}

\newtheorem{proposition}{Proposition}[section]
\newtheorem{lemma}{Lemma}[section]

\newcommand{\expfour}{\ensuremath{\mathsf{Exp4}}\xspace}

\def \R {\mathbb{R}}

\newcommand{\LS}{\mathrm{LS}}
\newcommand{\FG}{\mathrm{FG}}

\newcommand{\calA}{{\mathcal{A}}}
\newcommand{\smax}{{\mathrm{smax}}}

\newcommand{\calX}{{\mathcal{X}}}

\newcommand{\calF}{{\mathcal{F}}}

\newcommand{\calE}{{\mathcal{E}}}

\newcommand{\Reg}{\text{\rm Reg}}

\newcommand{\one}{\boldsymbol{1}}

                                                                                                                                                                                                                                                                                                                                                                                                                                                                                                                                                                                                                                                                                                                                                                                                                                                                                                                                                                                                                                                                          \newcommand{\sigmoid}{\ensuremath{\mathsf{Sigmoid}}\xspace}

\DeclareMathOperator*{\argmax}{argmax}
\DeclareMathOperator*{\argsmax}{argsmax}

\newcommand{\E}{\field{E}}

\newcommand{\inner}[1]{ \left\langle {#1} \right\rangle }

\newcommand{\wh}{\widehat}
\newcommand{\wt}{\widetilde}

\newcommand{\ellhat}{\wh{\ell}}

\newcommand{\AlgSq}{\ensuremath{\mathsf{AlgSq}}\xspace}
\newcommand{\ips}{\ensuremath{\mathsf{(IPS)}}\xspace}
\newcommand{\optsq}{\ensuremath{\mathsf{(OptSq)}}\xspace}
\newcommand{\sq}{\ensuremath{\mathsf{(Sq)}}\xspace}
\newcommand{\dec}{\ensuremath{\mathsf{dec}_\gamma}\xspace}

\newtheorem{assumption}{Assumption}

\newcommand{\order}{\ensuremath{\mathcal{O}}}
\newcommand{\otil}{\ensuremath{\tilde{\mathcal{O}}}}


\usepackage{lipsum,booktabs}
\usepackage{amsmath,mathrsfs,amssymb,amsfonts,bm,enumitem}
\usepackage{rotating}
\usepackage{pdflscape}
\usepackage{hyperref,url}
\hypersetup{
    colorlinks,
    breaklinks,
    linkcolor = blue,
    citecolor = blue,
    urlcolor  = blue,
}
\allowdisplaybreaks
\usepackage{appendix}
\usepackage{multirow,makecell,tabularx}

\usepackage{algorithmic,algorithm}

\renewcommand{\tilde}{\widetilde}


\def \E {\mathbb{E}}

\def \R {\mathbb{R}}

\def \Pr {\mathsf{Pr}}

\newcommand{\RegSq}{\ensuremath{\mathrm{\mathbf{Reg}}_{\mathsf{Sq}}}\xspace}

\usepackage{mathtools}




\usepackage{amsthm}

\theoremstyle{definition}

\usepackage{graphicx,subfigure,color} 

\definecolor{wine_red}{RGB}{228,48,64}
\definecolor{DSgray}{cmyk}{0,1,0,0}

\usepackage{prettyref}
\newcommand{\pref}[1]{\prettyref{#1}}

\newcommand{\savehyperref}[2]{\texorpdfstring{\hyperref[#1]{#2}}{#2}}
\newrefformat{eq}{\savehyperref{#1}{Eq. \textup{(\ref*{#1})}}}
\newrefformat{eqn}{\savehyperref{#1}{Eq.~(\ref*{#1})}}
\newrefformat{lem}{\savehyperref{#1}{Lemma~\ref*{#1}}}
\newrefformat{def}{\savehyperref{#1}{Definition~\ref*{#1}}}
\newrefformat{line}{\savehyperref{#1}{Line~\ref*{#1}}}
\newrefformat{thm}{\savehyperref{#1}{Theorem~\ref*{#1}}}
\newrefformat{corr}{\savehyperref{#1}{Corollary~\ref*{#1}}}
\newrefformat{cor}{\savehyperref{#1}{Corollary~\ref*{#1}}}
\newrefformat{sec}{\savehyperref{#1}{Section~\ref*{#1}}}
\newrefformat{app}{\savehyperref{#1}{Appendix~\ref*{#1}}}
\newrefformat{assum}{\savehyperref{#1}{Assumption~\ref*{#1}}}
\newrefformat{asm}{\savehyperref{#1}{Assumption~\ref*{#1}}}
\newrefformat{ex}{\savehyperref{#1}{Example~\ref*{#1}}}
\newrefformat{fig}{\savehyperref{#1}{Figure~\ref*{#1}}}
\newrefformat{alg}{\savehyperref{#1}{Algorithm~\ref*{#1}}}
\newrefformat{rem}{\savehyperref{#1}{Remark~\ref*{#1}}}
\newrefformat{conj}{\savehyperref{#1}{Conjecture~\ref*{#1}}}
\newrefformat{prop}{\savehyperref{#1}{Proposition~\ref*{#1}}}
\newrefformat{proto}{\savehyperref{#1}{Protocol~\ref*{#1}}}
\newrefformat{prob}{\savehyperref{#1}{Problem~\ref*{#1}}}
\newrefformat{claim}{\savehyperref{#1}{Claim~\ref*{#1}}}
\newrefformat{que}{\savehyperref{#1}{Question~\ref*{#1}}}
\newrefformat{op}{\savehyperref{#1}{Open Problem~\ref*{#1}}}
\newrefformat{fn}{\savehyperref{#1}{Footnote~\ref*{#1}}}

\def \epsilon {\varepsilon}

\usepackage{color-edits}
\addauthor{hl}{red}     
\addauthor{mz}{blue}  
\author{%
  Mengxiao Zhang \\
  University of Southern California \\  \texttt{mengxiao.zhang@usc.edu} \\ 
  \and
  Haipeng Luo \\
  University of Southern California \\  \texttt{haipengl@usc.edu}\\
}

\begin{document}
\maketitle

\begin{abstract}
We study online learning in contextual pay-per-click auctions where at each of the $T$ rounds, the learner receives some context along with a set of ads and needs to make an estimate on their click-through rate (CTR) in order to run a second-price pay-per-click auction.
The learner's goal is to minimize her regret, defined as the gap between her total revenue and that of an oracle strategy that always makes perfect CTR predictions.
We first show that $\sqrt{T}$-regret is obtainable via a computationally inefficient algorithm and that it is unavoidable since our algorithm is no easier than the classical multi-armed bandit problem. A by-product of our results is a $\sqrt{T}$-regret bound for the simpler non-contextual setting, improving upon a recent work of~\citet{ICML'23:auction} by removing the inverse CTR dependency that could be arbitrarily large. Then, borrowing ideas from recent advances on efficient contextual bandit algorithms, we develop two practically efficient contextual auction algorithms: the first one uses the exponential weight scheme with optimistic square errors and maintains the same $\sqrt{T}$-regret bound, while the second one reduces the problem to online regression via a simple epsilon-greedy strategy, albeit with a worse regret bound.
Finally, we conduct experiments on a synthetic dataset to showcase the effectiveness and superior performance of our algorithms.
\end{abstract}

\section{Introduction}\label{sec: intro}

The rapid growth of internet-based advertising has led to increasing reliance on online auctions to efficiently allocate advertisement slots. Pay-per-click (PPC) auctions, in particular, have become a prevalent mechanism in the digital advertising landscape, where advertisers are charged based on the number of clicks on their ads. In these auctions, a platform's primary goal is to deliver the relevant experience while maximizing value to the advertiser and the publisher.

Existing literature on online PPC auctions mostly focuses on the non-contextual setup~\citep{devanur2009price,buccapatnam2014stochastic,babaioff2015truthful,ICML'23:auction}, where the same set of ads repeatedly participates in an auction, each with a click-through rate (CTR) fixed over time.
In reality, however, the set of participating ads and their CTR vary in each auction based on the user query, user preferences/history, ad relevance, and other contextual information.
To tackle such practical scenarios, in this work, we consider online \textit{contextual} PPC auctions with unknown CTRs and study how the auction platform can leverage the contextual information to make a revenue close to that of an oracle strategy that runs a second-price auction with perfect knowledge of the CTRs.

More concretely, we formulate this problem as an online learning problem over $T$ rounds. At each round, the auction platform (learner) first observes some context and a set of participating ads, and then makes an estimate for the CTR of each ad without seeing their bid.
Afterwards, a second-price PPC auction, a truthful and widely used mechanism~\citep{aggarwal2006truthful}, is run: each ad is assigned a score equal to the product of its bid and its estimated CTR, and the ad with the highest score wins the auction with the payment-per-click equal to the \textit{critical price} (that is, the lowest price that still guarantees a win).
The learner's goal is to decide the estimated CTRs in a way so that the total revenue is close to what one would have received if the CTR estimations were always perfect --- we call the gap between them the \textit{regret} of the learner.
To make sublinear regret possible, we make a standard realizability assumption that the learner is given access to a CTR predictor class that contains a perfect and unknown predictor, but we do not make any assumption on how the contexts and the bids are generated --- they can even be maliciously decided by an adversary.

To our knowledge, our work is the first to consider online learning for such contextual PPC auctions.
However, similar to its non-contextual version, the problem has deep connections with the heavily studied (contextual) multi-armed bandit problem~\citep{lai1985asymptotically, auer2002nonstochastic}.
In particular, because we only observe feedback on the winner, balancing exploration and exploitation, the infamous dilemma originated from multi-armed bandits, is also the key challenge of our problem.
What makes our problem even more difficult is that we cannot explore/exploit whichever ad/arm we want but instead have to do so implicitly via setting the estimated CTRs, which themselves by definition also affect the reward of each ad/arm.
Despite these difficulties, by extending ideas from contextual bandits and making careful adjustment tailored to our problem structure, we obtain a series of positive results both theoretically and empirically. Specifically, our contributions are:

\begin{enumerate}
[leftmargin=*]
\item As the first step, in \pref{sec:EXP4}, we provide a characterization of the optimal regret of our problem via a computationally inefficient algorithm and a simple lower-bound argument showing that our problem is no easier than multi-armed bandits.
Our algorithm is based on the well-known exponential weight strategy, with an inverse propensity score (IPS) weighted loss estimator that is similar to the \expfour algorithm~\citep{auer2002nonstochastic} for contextual bandits.
For a finite predictor class $\calF$, our algorithm achieves $\order(\sqrt{NT\log|\calF|})$ near-optimal regret with $N$ being the number of ads. Notably, our result immediately implies $\otil(N\sqrt{T})$ regret for the non-contextual setting, improving upon the $\otil(\sum_{i=1}^N\frac{1}{\rho_i}\sqrt{T})$ regret of a recent work by~\citet{ICML'23:auction} where $\rho_i\in [0,1]$ is the CTR of ad $i$ (whose inverse could be arbitrarily large). 

\item To address the computational inefficiency, we then develop two practically efficient algorithms, taking inspiration from recent developments in designing efficient contextual bandit algorithms.
Our first approach (\pref{sec: TS}) replaces the IPS estimator with an \emph{optimistic square error estimator} that is efficiently computable and shares similar ideas with the Feel-Good Thompson Sampling algorithm of~\citet{zhang2022feel}.
The resulting algorithm not only still enjoys a $\sqrt{T}$-regret bound, but also admits an efficient (approximate) implementation by applying stochastic gradient Langevin dynamics (SGLD)~\citep{welling2011bayesian}.

\item Our second approach (\pref{sec: squarecb}) follows another trend of recent studies that reduce contextual bandits to an easier regression problem where efficient algorithms already exist~\citep{foster2020beyond,foster2021efficient,foster2021statistical}.
We adopt the general framework of~\citep{foster2021statistical} in attempt to find the optimal reduction from our contextual auction problem to online regression.
We provide a simple solution based on an epsilon-greedy strategy that is efficiently implementable and achieves $\order(T^{\frac{2}{3}}(N\RegSq)^{\frac{1}{3}})$ regret with $\RegSq$ being the regret of the regression problem.
While this method leads to a worse regret bound, we conjecture that it might be the best one can do using this reduction approach due to its distinct property: unlike the last two algorithms, this approach does not use the bid information from previous rounds to decide the next CTR estimates (which sometimes might be desirable in practice).

\item In \pref{sec:experiment}, we also test our two efficient algorithms on a synthetic dataset, demonstrating their superior performance against several baseline algorithms.
\end{enumerate}

\paragraph{Related works.} 

One line of closely related work is the study on the non-contextual counterpart, such as~\citep{devanur2009price,babaioff2014characterizing,babaioff2015truthful,ICML'23:auction}, all of which consider designing a globally truthful no-regret mechanism so that bidders are incentivized to bid their true valuation throughout all $T$ rounds.
\citep{devanur2009price, babaioff2014characterizing} achieves so via an explore-then-commit strategy with $\order(N^{\frac{1}{3}}T^{\frac{2}{3}})$ regret, which is shown to be optimal for globally truthful mechanisms.
\citep{babaioff2015truthful} further considers the setting where each bidder's bid is stochastic and designs a randomized auction that enjoys $\order(\sqrt{T})$ regret and is globally truthful only in expectation. Further extensions include multi-slot mechanism design~\citep{gatti2012truthful}, valuations unknown to the bidders~\citep{kandasamy2023vcg}, and others. 

A recent work by \citet{ICML'23:auction} considers the myopic bidder setting with adversarial bids and designs a UCB-based algorithm which leads to a per-round truthful auction and achieves $\otil(\sum_{i=1}^N\frac{1}{\rho_i}\sqrt{T})$ regret. They also consider the fixed valuation setting and design a globally truthful auction with $-\Omega(T)$ regret when there exists a time-independent constant gap between the winner and the runner up. More recently, \citep{xu2023robustness} generalizes this non-contextual setting to the stochastic context setting and derives an $\epsilon$-greedy-based algorithm achieving $\otil(N^{\frac{4}{3}}T^{\frac{2}{3}}+\frac{1}{\alpha^2}T^{\frac{1}{3}}N^{\frac{2}{3}})$ regret under $\alpha$-rational bidders.
We note that in our setting, similar to~\citep{ICML'23:auction}, the learner is allowed to adjust the auction on the fly based on previous observation (but not the bids for the current round), which makes it truthful per round but not necessarily globally.
However, since we allow \emph{adversarial} contexts that might not be manipulatable by the bidders, we do not find global truthfulness a meaningful requirement for our setting (see \pref{sec: pre} for more discussion).

Another line of work on online learning in auction considers designing auto-bidding algorithms under different types of auction mechanisms (such as first-price auction~\citep{wang2023learning,han2020learning}, second-price auction~\citep{balseiro2019learning,balseiro2023best}, and core auction~\citep{gaitonde2022budget}) or different types of resource and return-on-investment constraints~\citep{balseiro2019learning,balseiro2021robust,lucier2023autobidders}.
Since we allow adversarial bids, our results hold whether the platform is facing these auto-bidding algorithms or not.

As mentioned, our problem bares some similarity with the heavily-studied contextual bandit problem but is generally more difficult due to the fact that both the way to select an arm and its reward are determined through the estimated CTRs.
\citep{auer2002nonstochastic} proposes the first contextual bandit algorithm~\expfour which achieves the optimal regret but is computationally inefficient. 
Given the vast number of real-life applications of contextual bandits, starting from~\citep{langford2007epoch}, many studies focus on developing practically efficient algorithm under reasonable computational assumptions, such as access to classification/regression oracles~\citep{dudik2011efficient, agarwal2014taming, foster2018practical, foster2020beyond,xu2020upper,foster2021efficient,simchi2022bypassing,zhu2022contextual}, or ability to sample from a certain distribution using Markov chain Monte Carlo methods~\citep{zhang2022feel}.
As mentioned, we follow and extend the ideas of these two approaches to design efficient  contextual auction algorithms.

\section{Notations and Problem Setup}\label{sec: pre}

\paragraph{General notations.} 
For a positive integer $m$, we use $[m]$ to denote the set $\{1,2,\dots,m\}$, $\Delta_m$ to denote the $(m-1)$-dimensional simplex, and $\R^{m}_{\geq 0}$ to denote the set of $m$-dimensional vectors with non-negative entries.
For a vector $v\in\R^m$, we use $\max_{i\in[m]}v_i$ and $\smax_{i\in[m]}v_i$ to denote the largest and the second largest entry of $v$ respectively, and $\argmax_{i\in[m]}v_i$ and $\argsmax_{i\in[m]}v_i$ to denote their index.\footnote{In this definition, we break ties by an arbitrary fixed deterministic rule. Note that when there is a tie, it is possible that the ``second largest'' entry in fact has the largest value.}
We also use $\one$ to denote the all-one vector and $e_i$ to denote the $i$-th standard basis vector (both with an appropriate dimension depending on the context).

\paragraph{Problem setup.}
The formal setup of the contextual second-price pay-per-click auctions we consider is as follows.
An ad auction platform (called the learner) sequentially interacts with some bidders/advertisers for $T$ rounds.
At each round $t\in[T]$:
\begin{enumerate}[leftmargin=*]
\item The learner first observes a context $x_t$ from some context space $\calX$ and a set of $N_t$ bidders participating in the current campaign with their ads.
Here, the context $x_t$ encodes any available information about the current campaign, such as the user query (in the case of a search engine) and features of the participating ads.
We denote the maximum number of bidders by $N = \max_{t} N_t$.

\item What the learner needs to decide is an estimated CTR vector: $\wt{\rho}_{t}\in [0,1]^{N_t}$, with each entry $\wt{\rho}_{t,i}$ being an estimation of the true and unknown CTR $\rho_{t,i} \in [0,1]$ of ad $i$ under context $x_t$.

\item Simultaneously, each bidder $i$ decides their own bid $b_{t,i} \in [0,1]$ (without knowing $\wt{\rho}_t$ or $\rho_{t}$).

\item A second-price pay-per-click auction~\citep{aggarwal2006truthful} is then run: the winner of this campaign is $i_t=\argmax_{i\in[N_t]}b_{t,i}\wt{\rho}_{t,i}$, that is, the bidder with the highest estimated expected cost per impression;
the payment per click of the winner is $d_t=\frac{b_{t,j_t}\wt{\rho}_{t,j_t}}{\wt{\rho}_{t,i_t}}$ where $j_t = \argsmax_{i\in[N_t]} b_{t,i}\wt{\rho}_{t,i}$ is the runner-up (note $d_t \leq b_{t,i_t}$ by definition, so the winner never pays more than their bid);
the winner's ad is then displayed, and is clicked with probability $\rho_{t,i_t}$.

\item The feedback of the learner includes all the bids $b_{t,1}, \ldots, b_{t,N_t}$ and a binary variable $c_t \in \{0,1\}$, which is $1$ if the displayed ad is clicked and $0$ otherwise (by definition, $c_t$ is a Bernoulli random variable with mean $\rho_{t,i_t}$).
The payment that the learner receives in the end is thus $c_t d_t$.
\end{enumerate}

The goal of the learner is to minimize her regret, which measures how much more she would have received if she had perfect knowledge of the true CTRs and set $\wt{\rho}_t = \rho_t$ all the time.
More concretely, since in this imaginary situation with $\wt{\rho}_t = \rho_t$, the expected payment received at round $t$ is $\E[c_t d_t] = \rho_{t,i_t}\cdot\frac{b_{t,j_t}\rho_{t,j_t}}{\rho_{t,i_t}} = b_{t,j_t}\rho_{t,j_t} =  \smax_{i\in [N_t]} b_{t,i} \rho_{t,i}$,
the regret is formally defined as
\begin{align}\label{eqn:regret}
    \Reg \triangleq \sum_{t=1}^T\smax_{i\in[N_t]}\; b_{t,i}\rho_{t,i} - \sum_{t=1}^Tc_td_t.
\end{align}
Achieving sublinear (in $T$) regret thus implies that the learner is on average performing almost as well as the ideal benchmark even though she does not know the true CTRs.
This is a strong requirement that intuitively is  possible only if there is some connection between the context $x_t$ and the true CTR vector $\rho_t$ so that over time the learner can gradually improve her estimation $\wt{\rho}_t$ based on prior observations.
To this end, we make the following realizable assumption that is analogous to those usually made in the contextual bandit literature (see for example~\citep{foster2020beyond,foster2021efficient,foster2021statistical,zhu2022contextual,simchi2022bypassing}).

\begin{assumption}[Realizability]\label{asm:realizability}
A function class $\calF = \{f: \calX \times [N] \rightarrow [0,1]\}$, given to the learner, contains a perfect (but unknown) CTR predictor $f^*$ such that $f^*(x_t, i) = \rho_{t,i}$ for all $t$ and $i$.
\end{assumption}

Our goal is to derive algorithms with a regret bound that is sublinear in $T$ and polynomial in $N$ and some common complexity measure of $\calF$, such as $\ln |\calF|$ for the case of a finite class or the number of parameters for the case of a parametric class.
In practice, such a function class $\calF$ can be any common machine learning model (for example, a linear class or a class of neural nets), as long as it is believed to be complex enough to predict the true CTRs reasonably well for the specific problem on hand.

We emphasize that we do not make any assumptions on how the contexts and the bids are generated. In particular, they can even be generated by an adaptive adversary who knows the learner's algorithm and observes her decisions in previous rounds.
This generality allows us to handle strategic bidders.

\paragraph{Instantaneous truthfulness versus global truthfulness.}
The reason to consider such a second-price auction is that it is well-known to be \textit{instantaneous truthful}, that is, looking only at a particular round, each bidder is incentivized to use their true valuation as their bid~\citep{aggarwal2006truthful}.
Prior work such as~\citep{devanur2009price,ICML'23:auction} considers designing a (non-contextual) auction mechanism that is globally truthful so that even looking at all rounds together, each bidder is still incentivized to bid their true valuation.
However, this is often achieved by an explore-then-commit strategy which fixes the auction mechanism (such as the estimated CTRs) after a certain period of pure exploration.
Such strategies not only are impractical but also would not make sense at all in our contextual setting with potentially adversarial contexts.
In fact, exactly because the contexts in our problem are partially decided by exogenous factors such as the user queries (that are not manipulable by the bidders), we do not find global truthfulness a reasonable  requirement for our problem.
We thus stick with only instantaneous truthfulness and allow the learner to change the  auction  on the fly based on the context.

\paragraph{Comparisons to contextual bandits.}
One key difference between our problem and the well-studied contextual bandit problem is that we \emph{cannot} freely pick an ``arm'' (corresponding to the winner in our context), but have to do so via
proposing a particular estimated CTR vector $\wt{\rho}_t$, which itself affects the ``reward'' of an arm according to the definition of the winner's payment, making the problem more complicated.
On the other hand, what is common in both problems is that a good balance between exploration and exploitation is clearly necessary due to the limited feedback on the selected arm/winner only.
Because of these similarities and differences, in what follows we will show that some ideas from the contextual bandit literature are readily applicable to our problem, while others require more  adjustments.

\section{Achieving $\order(\sqrt{T})$ Regret Inefficiently}\label{sec:EXP4}
\begin{algorithm*}[t]
\caption{Exponential Weights for Contextual Auction}\label{alg:EXP4}

Input: learning rate $\eta>0$ and CTR predictor class $\calF$

\For{$t = 1, 2, \cdots , T$}{
	Sample a function $f_t$ from $q_t$, a distribution over $\calF$ defined via $q_{t,f} \propto  \exp(-\eta\sum_{s<t}\ellhat_{s,f})$.

    Receive context $x_t$ and the set of $N_t$ bidders.
    
     Set the estimated CTR to be $\wt{\rho}_{t,i}=f_t(x_t, i)$ for all $i\in[N_t]$ and receive bids $b_t\in [0,1]^{N_t}$.
    
    Select the bidder $i_t=\argmax_{i\in [N_t]}b_{t,i}\wt{\rho}_{t,i}$ as the winner, with payment per click $d_t=\frac{b_{t,j_t}\wt{\rho}_{t,j_t}}{\wt{\rho}_{t,i_t}}$ where $j_t = \argsmax_{i\in [N_t]} b_{t,i}\wt{\rho}_{t,i}$ is the runner up.
    
    Receive feedback $c_t=\mathbbm{1}\{\text{ad $i_t$ is clicked}\}$ and payment $c_td_t$.

    Define loss estimator $\ellhat_{t, f}$ for each $f\in\calF$ as
    \begin{equation}\label{eqn:ellhat}
        \ellhat_{t,f}= \begin{cases}
        \frac{\mathbbm{1}\{i_t=\argmax_{i\in [N_t]}b_{t,i}f(x_t,i)\}}{p_{t,i_t}}\left(1-c_t\cdot\frac{\smax_{j\in[N_t]}\ b_{t,j}f(x_t,j)}{f(x_t,i_t)}\right), &\text{\ips}\\
        \frac{1}{4\eta}(f(x_t,i_t) - c_t)^2 - \smax_{j\in [N_t]}\ b_{t,j}f(x_t,j), &\text{\optsq}
     \end{cases}
    \end{equation}
    and $p_{t,i}=\Pr_{f_t \sim q_t}\{i_t=i\}$ is the probability of $i$ being the winner.
}
\end{algorithm*}

In this section, as the first step, we show how adopting the idea of a classical contextual bandit algorithm called \expfour~\citep{auer2002nonstochastic} leads to, for example, $\order(\sqrt{NT\log|\calF|})$ regret for a finite predictor class $\calF$.
The algorithm is computationally inefficient, but it illustrates that $\sqrt{T}$-type regret is obtainable for this problem information-theoretically.

The idea of \expfour is to maintain a distribution $q_t$ over all predictors in $\calF$, defined via a classical exponential weight scheme: $q_{t,f} \propto  \exp(-\eta\sum_{s<t}\ellhat_{s,f})$ where $\ellhat_{s,f}$ is an estimator for some loss $\ell_{s, f} \in [0,1]$ of predictor $f$ at round $s$.
With such a distribution, the algorithm simply samples a predictor $f_t$ from $q_t$ for round $t$ and follows its suggestion.

In our problem, ``following $f_t$'s suggestion'' means setting the estimated CTR $\wt{\rho}_{t,i}$ directly as $f_t(x_t, i)$ for each $i \in [N_t]$.
The loss $\ell_{t, f}$ of predictor $f$ at round $t$ is intuitively the negative expected payment if one follows $f$'s suggestion.
To ensure a range of $[0,1]$, we shift it by $1$, leading to $\ell_{t, f} = 1 - \rho_{t, i_{t,f}}\frac{\smax_j b_{t,j}f(x_t,j)}{f(x_t,i_{t,f})}$ where $i_{t,f} = \argmax_i b_{t,i} f(x_t,i)$.
It remains to construct the loss estimator $\ellhat_{t,f}$ based on the learner's observations $b_t$ and $c_t$.
Even though our loss structure is quite different from contextual bandits, we find the standard inverse propensity score weighting still applicable, leading to a natural loss estimator defined in the \ips option of ~\pref{eqn:ellhat}.
It is clear that $\ellhat_{t,f}$ is unbiased since $\E[\mathbbm{1}\{i_t = i_{t,f}\}] = p_{t,i_{t,f}}$ and $\E[c_t] = \rho_{t, i_t}$ implies $\E[\ellhat_{t,f}] = \ell_{t,f}$.
See~\pref{alg:EXP4} for the complete pseudocode. 
Following standard analysis of \expfour, we show the following regret bound when $\calF$ is finite (whose proof is deferred to \pref{app:EXP4}).
\begin{restatable}{theorem}{expFour}\label{thm:exp4}
\pref{alg:EXP4} with learning rate $\eta=\sqrt{\frac{\log|\calF|}{\sum_{t=1}^TN_t}}$ and \ips estimators guarantees  
$
    \E\left[\Reg\right] = \order\Big(\sqrt{\sum_{t=1}^TN_t\log|\calF|}\Big) = \order(\sqrt{NT\log|\calF|}).
$\footnote{For simplicity, we set the learning rate in terms of the unknown quantity $\sum_{t=1}^TN_t$, but this can be easily resolved by applying a standard doubling trick. The same holds for other results in this work.}
\end{restatable}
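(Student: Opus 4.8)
The plan is to carry out the textbook \expfour analysis (exponential weights with an importance-weighted loss estimator), the one nonstandard ingredient being a second-moment bound tailored to the second-price structure. Write $\E_t[\cdot]$ for the conditional expectation given the history through round $t-1$ together with the adversary's round-$t$ choices $x_t,b_t$ (and $\rho_t=f^*(x_t,\cdot)$); this conditioning is legitimate because the bidders do not observe $\wt\rho_t$, so $x_t,b_t,\rho_t$ are all fixed before $f_t\sim q_t$ is drawn, and $q_t$ is measurable with respect to it. Set $i_{t,f}=\argmax_i b_{t,i}f(x_t,i)$ and $\ell_{t,f}=1-\rho_{t,i_{t,f}}\cdot\frac{\smax_j b_{t,j}f(x_t,j)}{f(x_t,i_{t,f})}$. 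The first thing to check is $\ell_{t,f}\in[0,1]$: the subtracted quantity is nonnegative and is at most $b_{t,i_{t,f}}\le1$ because $\smax_j b_{t,j}f(x_t,j)\le\max_j b_{t,j}f(x_t,j)=b_{t,i_{t,f}}f(x_t,i_{t,f})$ (this uses the consistent tie-breaking convention so that $i_{t,f}$ attains the max). Evaluating at $f^*$ the CTR factors cancel, giving $\ell_{t,f^*}=1-\smax_j b_{t,j}\rho_{t,j}$, so $1-\ell_{t,f^*}$ is exactly the per-round oracle payment; and once $f_t$ is drawn, $\E_{c_t}[c_td_t]=\rho_{t,i_{t,f_t}}\cdot\frac{\smax_j b_{t,j}f_t(x_t,j)}{f_t(x_t,i_{t,f_t})}=1-\ell_{t,f_t}$. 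Taking expectations, $\E[\Reg]=\E\big[\sum_t\langle q_t,\ell_t\rangle-\sum_t\ell_{t,f^*}\big]$ with $\ell_t=(\ell_{t,f})_{f\in\calF}$.

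Next I would verify that the \ips estimator in \pref{eqn:ellhat} is unbiased, $\E_t[\ellhat_{t,f}]=\ell_{t,f}$, which follows from $\E_t[\mathbbm{1}\{i_t=i_{t,f}\}]=p_{t,i_{t,f}}$ and $\E_t[c_t\mathbbm{1}\{i_t=i_{t,f}\}]=\rho_{t,i_{t,f}}\,p_{t,i_{t,f}}$ together with the fact that $p_{t,i_t}$ and $f(x_t,i_t)$ equal $p_{t,i_{t,f}}$ and $f(x_t,i_{t,f})$ on the event $\{i_t=i_{t,f}\}$ (with the convention $0/0=0$). Since $q_t$ is determined before $f_t$ is sampled, $\E[\langle q_t,\ell_t\rangle]=\E[\langle q_t,\ellhat_t\rangle]$ and $\E[\ell_{t,f^*}]=\E[\ellhat_{t,f^*}]$, so $\E[\Reg]=\E\big[\sum_t\langle q_t,\ellhat_t\rangle-\sum_t\ellhat_{t,f^*}\big]$. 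Because $\ellhat_{t,f}\ge0$ (the sign comes from the same $[0,1]$ computation as above, now with $c_t\le1$), the standard exponential-weights potential argument — telescoping $\ln(W_{t+1}/W_t)$ and using $e^{-y}\le1-y+y^2$, valid for all $y\ge0$ — applies verbatim and gives, pathwise for the comparator $f^*\in\calF$,
\[
\sum_t\langle q_t,\ellhat_t\rangle-\sum_t\ellhat_{t,f^*}\ \le\ \frac{\ln|\calF|}{\eta}+\eta\sum_t\langle q_t,\ellhat_t^2\rangle .
\]

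It then remains to bound $\E_t[\langle q_t,\ellhat_t^2\rangle]\le N_t$. On $\{i_t=i_{t,f}\}$ the factor $\big(1-c_t\frac{\smax_j b_{t,j}f(x_t,j)}{f(x_t,i_t)}\big)$ lies in $[0,1]$, so $\ellhat_{t,f}^2\le \mathbbm{1}\{i_t=i_{t,f}\}/p_{t,i_{t,f}}^2$ and hence $\E_t[\ellhat_{t,f}^2]\le 1/p_{t,i_{t,f}}$. Grouping the predictors by the winner they would induce — $S_i=\{f:i_{t,f}=i\}$, so that $p_{t,i}=\sum_{f\in S_i}q_{t,f}$ — yields $\sum_f q_{t,f}/p_{t,i_{t,f}}=\sum_{i:\,p_{t,i}>0}q_t(S_i)/p_{t,i}=|\{i:p_{t,i}>0\}|\le N_t$, which is the \expfour-style collapse of an $|\calF|$-dimensional sum to an $N_t$-dimensional one. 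Combining the pieces, $\E[\Reg]\le\frac{\ln|\calF|}{\eta}+\eta\sum_t N_t$, and the choice $\eta=\sqrt{\ln|\calF|/\sum_tN_t}$ gives $\E[\Reg]\le 2\sqrt{\ln|\calF|\sum_tN_t}=\order(\sqrt{NT\log|\calF|})$, with the unknown $\sum_tN_t$ removed by the usual doubling trick.

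The step where the problem genuinely departs from vanilla \expfour — and the one I expect to require the most care — is the second-moment bound: it is not obvious a priori that this unusual loss/estimator admits the same clean $\order(N_t)$ control, and the two reasons it does are (i) the inner factor stays in $[0,1]$, which hinges on the second-price rule $\smax\le\max$ plus consistent tie-breaking, and (ii) the importance weight $1/p_{t,i_t}$ is shared by all predictors that would have selected the realized winner, which is what makes the winner-grouping identity go through. Everything else is routine bookkeeping around an adaptive adversary and the classical Hedge potential analysis.
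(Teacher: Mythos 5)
Your proposal is correct and follows essentially the same route as the paper's proof: unbiasedness of the \ips estimator, the standard exponential-weights potential bound with comparator $f^*$, and the second-moment control via the $[0,1]$ range of the inner factor (from $\smax_j b_{t,j}f(x_t,j)\le b_{t,i_{t,f}}f(x_t,i_{t,f})$) together with the winner-grouping identity $p_{t,i}=\sum_{f:i_{t,f}=i}q_{t,f}$, which collapses the sum over $\calF$ to at most $N_t$ terms. The only differences are cosmetic (you take conditional expectations before grouping, the paper groups pathwise first and uses $\E[\mathbbm{1}\{i_t=i\}/p_{t,i}]=1$), so nothing of substance is missing.
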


Now, we argue that such $\sqrt{T}$ dependence in the regret is unavoidable.
To see this, we first discuss the implication of \pref{thm:exp4} for a special non-contextual case where for each $i$, $\rho_{t,i} = \rho_i$ stays the same for all $t$ (so the context $x_t$ plays no role in predicting the CTRs), which is essentially the setting of~\citep{devanur2009price}.
In this case, a class of constant predictors (that ignore the context input) $\calF=\{(x,i) \rightarrow \theta_i : \theta \in [0,1]^N\}$ trivially satisfies \pref{asm:realizability}.
By applying \pref{alg:EXP4} to a discretized version this class: $\wh{\calF}=\{(x,i) \rightarrow \theta_i : \theta \in \{0, \frac{1}{T}, \frac{2}{T}, \ldots, 1\}^N\}$, which has a size of $\order(T^N)$ and can approximate $\calF$ up to error $\order(1/T)$, we immediately obtain the following corollary (see \pref{app:EXP4} for the proof).

\begin{restatable}{corollary}{noncontextual}\label{cor:noncontextual}
    In the non-contextual setting described above, \pref{alg:EXP4} with predictor class $\wh{\calF}$, learning rate $\eta=\sqrt{\frac{\log T}{T}}$, and \ips estimators guarantees $\E[\Reg]\leq \order(N\sqrt{T\log T})$.\footnote{We note that this is not a contradiction with the $\Omega(T^{2/3})$ lower bound of~\citep{devanur2009price} since they insist on global truthfulness while we do not; see related discussions in \pref{sec: pre}.}
\end{restatable}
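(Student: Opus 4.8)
The plan is to reduce \pref{cor:noncontextual} to \pref{thm:exp4} by running \pref{alg:EXP4} on the finite class $\wh\calF$ and paying a negligible ``approximation price'' for the fact that $\wh\calF$ contains only a close discretization of the true predictor $f^\star=\rho$ rather than $f^\star$ itself. Throughout, for a proposed CTR vector $v\in[0,1]^N$ write $h_t(v):=\rho_{i_v}\cdot\frac{\smax_j b_{t,j}v_j}{v_{i_v}}$ with $i_v=\argmax_i b_{t,i}v_i$ (convention $0/0=0$) for the expected payment at round $t$ when the learner proposes $v$; by definition $1-\ell_{t,f}=h_t(f(x_t,\cdot))$, and the oracle's expected payment is $h_t(\rho)=\smax_i b_{t,i}\rho_i$ as recalled in \pref{sec: pre}. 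Fix the comparator $\hat f\in\wh\calF$ that rounds each coordinate of $\rho$ \emph{up} to the grid, i.e.\ $\hat f(x,i)\equiv\theta_i:=\lceil\rho_iT\rceil/T$, so that $\rho_i\le\theta_i\le\rho_i+1/T$ for all $i$. Then decompose
\begin{align*}
\Reg \;=\; \underbrace{\sum_{t=1}^{T}\bigl(h_t(\rho)-h_t(\theta)\bigr)}_{(\mathrm{I})}\;+\;\underbrace{\sum_{t=1}^{T}h_t(\theta)-\sum_{t=1}^{T}c_td_t}_{(\mathrm{II})}.
\end{align*}

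Term $(\mathrm{II})$ is exactly the regret of \pref{alg:EXP4} measured against the fixed predictor $\hat f$. I would bound it by rerunning the proof of \pref{thm:exp4} verbatim with $\hat f\in\wh\calF$ in the role of $f^\star$: the step relating $\E[\sum_t c_td_t]$ to $\E[\sum_t(1-\ell_{t,f_t})]$ is unchanged, the losses $\ell_{t,f}$ remain in $[0,1]$ for every $f\in\wh\calF$, and the exponential-weights regret bound holds against any fixed element of the class (realizability is not used there). This gives $\E[(\mathrm{II})]=\order\bigl(\sqrt{\sum_tN_t\log|\wh\calF|}\bigr)$. Since $N_t=N$ for all $t$ and $|\wh\calF|=(T+1)^N$, we have $\sum_tN_t=NT$ and $\log|\wh\calF|=N\log(T+1)$, so $\E[(\mathrm{II})]=\order(N\sqrt{T\log T})$; note also that the prescribed $\eta=\sqrt{\log T/T}$ agrees up to constants with the optimal choice $\sqrt{\log|\wh\calF|/\sum_tN_t}$ because the two factors of $N$ cancel.

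It remains to show $(\mathrm{I})\le 1$, and this is the only real content; it is also where we avoid the inverse-CTR blow-up of \citet{ICML'23:auction}. I claim $h_t(\rho)-h_t(\theta)\le 1/T$ at every round. Because $\theta_i\ge\rho_i$ for all $i$, the scores dominate coordinatewise, $b_{t,i}\theta_i\ge b_{t,i}\rho_i$, hence so do their second-largest entries: $\smax_j b_{t,j}\theta_j\ge\smax_j b_{t,j}\rho_j=h_t(\rho)$. Writing $i'=i_\theta$ for the $\theta$-winner, this yields $h_t(\theta)=\tfrac{\rho_{i'}}{\theta_{i'}}\smax_j b_{t,j}\theta_j\ge\tfrac{\rho_{i'}}{\theta_{i'}}\,h_t(\rho)$, so that $h_t(\rho)-h_t(\theta)\le h_t(\rho)\cdot\tfrac{\theta_{i'}-\rho_{i'}}{\theta_{i'}}$. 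Now $\theta_{i'}-\rho_{i'}\le 1/T$, and crucially $h_t(\rho)=\smax_j b_{t,j}\rho_j\le\max_j b_{t,j}\rho_j\le\max_j b_{t,j}\theta_j=b_{t,i'}\theta_{i'}\le\theta_{i'}$, so the quotient $h_t(\rho)/\theta_{i'}\le1$ and the difference is at most $1/T$ (the degenerate case in which all scores $b_{t,i}\theta_i$ vanish forces all $b_{t,i}\rho_i$ to vanish too, so both sides are $0$). Summing over $t$ gives $(\mathrm{I})\le 1$, and combining with the bound on $(\mathrm{II})$ finishes the proof. The main obstacle here is that $h_t$ is far from Lipschitz in the proposed CTR vector — a perturbation both flips the winner and enters through the denominator $1/\theta_{i'}$ — and the resolution is exactly the two observations above: rounding \emph{up} keeps the runner-up score from decreasing, and the potential $1/\theta_{i'}$ blow-up is always neutralized by the bound $h_t(\rho)\le\theta_{i'}$.
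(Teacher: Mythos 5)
Your proof is correct and follows essentially the same route as the paper's: both plug a discretized comparator into the intermediate exponential-weights bound \pref{eqn:exp4-regret} from the proof of \pref{thm:exp4} and then show the per-round discretization cost is at most $1/T$. The only difference is in that last step: you round $\rho$ \emph{up} and neutralize the $1/\theta_{i'}$ denominator via $h_t(\rho)\le\theta_{i'}$, whereas the paper rounds \emph{down} (so $\rho_{i'}/\wh{f}^*_{i'}\ge 1$ makes the realized payment dominate the runner-up score directly) and instead case-splits on whether the winners under $\rho$ and $\wh{f}^*$ coincide; both arguments avoid any inverse-CTR dependence and yield the same bound.
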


Notably, \citep{ICML'23:auction} considers the same non-contextual setting and designs a UCB-based algorithm with $\order\big(\sum_{i=1}^N\frac{1}{\rho_i}\sqrt{T\log (NT)}\big)$ regret. Our result thus strictly improves upon theirs by removing the dependence on $\frac{1}{\rho_i}$, which can be arbitrarily large as long as there exists an ad with a very low CTR.

Next, we show in the following theorem that the $\sqrt{T}$ dependence is unavoidable even in this non-contextual case. 
\begin{restatable}{theorem}{noncontextual_lower_bound}\label{thm:noncontextual_lower_bound}
    In the non-contextual setting described above with $T\geq N\geq 3$, for any algorithm $\calA$, there exists a sequence of bids $\{b_{t}\}_{t=1}^T$ and CTRs $\{\rho_{i}\}_{i=1}^N$ such that the expected regret suffered by $\calA$ is at least $\Omega(\sqrt{NT})$.
\end{restatable}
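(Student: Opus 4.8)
The plan is to embed a hard multi-armed-bandit instance into the non-contextual auction, exploiting that with \emph{all bids equal to $1$} the learner gets to choose the winner (it is $i_t=\argmax_i\wt{\rho}_{t,i}$) but only ever observes a click on that self-chosen winner. Concretely, I would fix $b_{t,i}=1$ for all $t\in[T]$ and $i\in[N]$, pick a hidden pair $\{a,b\}\subseteq[N]$, and set $\rho_a=\rho_b=\tfrac12$ while $\rho_i=\tfrac12-\Delta$ for every other $i$, with $\Delta=\Theta(\sqrt{N/T})$ to be tuned. The first key point is that \emph{tying the two largest CTRs makes the oracle benchmark tight}: $\smax_{i}b_{t,i}\rho_i=\tfrac12$, so the benchmark equals $T/2$, whereas since $d_t=\wt{\rho}_{t,j_t}/\wt{\rho}_{t,i_t}\le 1$ the learner's expected per-round revenue is $\E[\rho_{i_t}d_t]\le\E[\rho_{i_t}]\le\tfrac12$, and is at most $\tfrac12-\Delta$ whenever the winner is not one of the two special ads. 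Hence, writing $n_k:=\#\{t:i_t=k\}$, we get $\E[\Reg]\ge\Delta\big(T-\E[n_a+n_b]\big)$, so it suffices to show that for some hidden pair the algorithm cannot steer the winner onto $\{a,b\}$ more than a constant fraction of the rounds.

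That last step is the standard information-theoretic argument. Let $I_0$ be the ``null'' instance with $\rho_i=\tfrac12-\Delta$ for every $i$ (so $I_{\{a,b\}}$ differs from $I_0$ only in coordinates $a,b$), and let $P_0,P_{\{a,b\}}$ be the laws of the length-$T$ transcript under the fixed bids. Because the bids are deterministic and $\wt{\rho}_t$ depends only on the past, the only thing that changes between instances is the law of the click $c_t\sim\mathrm{Ber}(\rho_{i_t})$, so the divergence decomposition gives $\KL(P_0\,\|\,P_{\{a,b\}})=\KL(\mathrm{Ber}(\tfrac12-\Delta)\,\|\,\mathrm{Ber}(\tfrac12))\cdot\E_{I_0}[n_a+n_b]\le 4\Delta^2\,\E_{I_0}[n_a+n_b]$. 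Since $n_a+n_b\in[0,T]$, combining $\E_{I_{\{a,b\}}}[Y]-\E_{I_0}[Y]\le T\cdot\mathrm{TV}(P_0,P_{\{a,b\}})$ with Pinsker yields $\E_{I_{\{a,b\}}}[n_a+n_b]\le\E_{I_0}[n_a+n_b]+\sqrt{2}\,T\Delta\sqrt{\E_{I_0}[n_a+n_b]}$.

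Finally I would average over $\{a,b\}$ uniform on the $\binom N2$ pairs. As $\sum_k n_k=T$ identically, $\mathrm{avg}_{\{a,b\}}\E_{I_0}[n_a+n_b]=\tfrac{2T}{N}\le\tfrac{2T}{3}$, and by Jensen $\mathrm{avg}_{\{a,b\}}\sqrt{\E_{I_0}[n_a+n_b]}\le\sqrt{2T/N}$; hence $\mathrm{avg}_{\{a,b\}}\E_{I_{\{a,b\}}}[n_a+n_b]\le\tfrac{2T}{3}+2T\Delta\sqrt{T/N}$. Choosing $\Delta=\tfrac1{12}\sqrt{N/T}$ (which is $<\tfrac12$ since $T\ge N$, keeping all CTRs in $(0,1)$) bounds the second term by $T/6$, so some pair $\{a,b\}$ satisfies $\E_{I_{\{a,b\}}}[n_a+n_b]\le\tfrac{5T}{6}$, and for that instance $\E[\Reg]\ge\Delta\cdot\tfrac{T}{6}=\tfrac1{72}\sqrt{NT}=\Omega(\sqrt{NT})$, as claimed (valid for all $N\ge3$, $T\ge N$).

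The main obstacle is the construction in the first paragraph, not the MAB machinery. The subtlety peculiar to this problem is the second-price payment rule: if there were a \emph{single} best ad, the learner could make that ad win while pushing the per-click payment arbitrarily close to $1$, thereby out-earning the oracle (whose payment is throttled by a strictly lower runner-up CTR), making the regret possibly negative and killing any lower bound. Forcing an exact tie at the top CTR zeroes out this payment gap and reduces the task to a clean ``identify one of two marked arms among $N$ via indirect feedback'' problem, after which the KL/Pinsker/averaging toolkit applies routinely; one should also note in passing that $i_t$ and $d_t$ are legitimate functions of the transcript, which is why the divergence decomposition is valid.
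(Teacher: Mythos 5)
Your proof is correct and follows essentially the same route as the paper's: all-ones bids, a uniformly random tied pair of slightly better ads among $N$ (so the second-price benchmark is tight and revenue is bounded by the winner's CTR since the payment never exceeds the bid), followed by the standard change-of-measure argument (divergence decomposition, Pinsker, averaging over the $\binom{N}{2}$ pairs) to bound the pulls of the special pair. The only differences are cosmetic — you shift the baseline CTR to $1/2-\Delta$ instead of raising the special ads to $1/2+\epsilon$, and you spell out the KL/TV step that the paper delegates to a textbook exercise — yielding the same $\Omega(\sqrt{NT})$ bound with slightly different constants.
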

The formal proof of \pref{thm:noncontextual_lower_bound} is deferred to \pref{app:EXP4}. The general idea of the proof is to transform the auction problem in this non-contextual setting to a hard instance of the classical multi-armed bandit problem, whose minimax regret is well-known to be of order $\Theta(\sqrt{NT})$. 
To see this, consider the case where $N_t = N$ and $b_t = \mathbf{1}$ for all $t$, and $\rho_{i} = 1/2$ for all $i$ except for two uniformly at random chosen ads, whose CTR are always $1/2+\Theta(\sqrt{N/T})$.
Obverse that: 1) the oracle strategy with $\wt{\rho}_{t,i} = \rho_{i}$ receives expected payment $1/2+\Theta(\sqrt{N/T})$ per round; 2) no matter which ad (arm) is selected as the winner by the learner, since the payment per click can not exceed the  bid $1$ according to the auction design, the expected payment of the learner is at most the CTR of the chosen winner (arm); 3) the feedback to the learner at each round is only a Bernoulli random variable with mean being the CTR of the chosen winner (arm).
These facts together show that the instance above is no easier than an $N$-armed bandit problem with reward means being the same as the CTR configuration above, which is well-known to incur $\Omega(\sqrt{NT})$ regret.

While \pref{alg:EXP4} with \ips achieves optimal regret guarantee, as mentioned, the caveat of \pref{alg:EXP4} with \ips loss estimators is its computational inefficiency.
Indeed, the complicated form of the estimator, in particular the $p_{t,i_t}$ term, makes it difficult to compute efficiently.
In the next two sections, we address this issue via two different approaches.

\section{An Efficient Approach via Optimistic Squared Errors}\label{sec: TS}

Our first approach to derive a practically efficient algorithm is to replace the \ips estimator $\ellhat_{t,f}$ in \pref{alg:EXP4} with an optimistic squared error that not only is easy to compute itself, but also allows efficient gradient computation so that one can directly apply stochastic gradient Langevin dynamics (SGLD)~\citep{welling2011bayesian} to approximately sample from $q_t$.
The optimistic squared error is defined in \optsq of \pref{eqn:ellhat}, whose design is largely inspired by the Feel-Good Thompson Sampling algorithm of a recent work~\citep{zhang2022feel} for contextual bandits.
It contains a natural squared error term $(f(x_t, i_t) - c_t)^2$ (scaled by $\nicefrac{1}{4\eta}$), which measures how good $f$ is in predicting the CTR of the selected winner $i_t$ under context $x_t$.
However, the squared error itself is insufficient in encouraging exploration; see \citep{zhang2022feel} for detailed discussions even for the easier contextual bandit problem.
To ensure exploration, we propose to subtract from the squared error a ``bonus'' term $\smax_j b_{t,j} f(x_t,j)$, which is the expected payment we would receive if $f$'s predictions were exactly the true CTRs (and we followed these predictions in setting $\wt{\rho}_t$).
This serves as a form of optimism: the larger the payment in $f$'s predicted world, the smaller its optimistic squared error $\ellhat_{t,f}$, and consequently the larger weight $f$ gets in the sampling distribution $q_t$ to ensure that it is properly explored.
The necessity of such optimism is not only for regret analysis, but also verified in our experiments (see \pref{sec:experiment}).

\paragraph{Efficient (approximate) implementation.}
The advantage of the \optsq estimator over \ips is that it enables efficient sampling for a parametrized predictor class.
Specifically, consider a parametrized and differentiable predictor class $\calF = \{f_\theta : \theta \in \Theta\}$ for some $d$-dimensional parameter space $\Theta \subseteq \R^d$.
Then the gradient of $\ellhat_{t,f_\theta}$ with respect to $\theta$ (at any differentiable point) is 
\[
\frac{\partial \ellhat_{t,f_\theta}}{\partial \theta} =
\frac{1}{2\eta}(f_\theta(x_t, i_t) - c_t)\cdot \frac{\partial f_\theta(x_t, i_t)}{\partial \theta} - b_{t, k} \frac{\partial f_\theta(x_t, k)}{\partial \theta}
\]
where $k= \argsmax b_{t,j} f_\theta(x_t, j)$.
We can then apply SGLD to approximately sample from $q_t$ as follows: start from a random initialization $\theta \in \Theta$, then repeatedly sample uniformly at random an $s\in [t-1]$ and update
\begin{equation}\label{eqn:SGLD}
\theta \leftarrow \theta - \alpha\eta\frac{\partial \ellhat_{s,f_\theta}}{\partial \theta} + \sqrt{\frac{2\alpha}{t}}\epsilon
\end{equation}
where $\epsilon$ is a fresh $d$-dimensional standard Gaussian noise and $\alpha$ is a step size.
Even though the existing theory for SGLD does not necessarily tell us how many steps are needed to ensure an accurate enough sample (which is already the case for contextual bandits~\citep{zhang2022feel}), this is clearly a practically efficient gradient-based method in line with standard machine learning practice. 

\paragraph{Regret guarantees.}
Next, we show that the resulting algorithm also enjoys similar regret guarantees as the version with \ips. We start by the following general theorem, whose analysis extends the idea of~\citep{zhang2022feel} to our more complicated loss structure and the new bonus term that is tailored to second-price auctions and importantly involves possibly adversarial bids $b_t$ (see \pref{app: TS}).

\begin{restatable}{theorem}{TS}\label{thm:TS}
    \pref{alg:EXP4} with learning rate $\eta\leq 1$ and \optsq estimators ensures
    $
        \E[\Reg] \leq  \frac{Z_T}{\eta} + \order\left(\eta\sum_{t=1}^TN_t\right),
    $
    where $Z_T = -\E\left[\log\E_{f\sim q_1}\exp\left(-\eta\sum_{t=1}^T\left(\ellhat_{t,f}-\ellhat_{t,f^*}\right)\right)\right]$. 
\end{restatable}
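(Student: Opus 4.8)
The plan is to run the standard exponential-weights / mirror-descent potential argument on the cumulative estimated losses $\widehat L_{T,f} \defeq \sum_{t=1}^T \ellhat_{t,f}$, but with two twists tailored to the \optsq estimator: (i) the ``regret'' we bound is with respect to the estimated losses of the benchmark $f^*$, and we must separately control the \emph{bias} introduced by the fact that $\ellhat_{t,f}$ is \emph{not} an unbiased estimator of $\ell_{t,f}$ here (unlike the \ips case); (ii) the exploration bonus inside $\ellhat_{t,f}$ is exactly what cancels that bias for $f^*$ while keeping it favorable for every other $f$. Concretely, first I would invoke the classical exponential-weights inequality: since $q_{t,f}\propto \exp(-\eta \widehat L_{t-1,f})$, the standard ``be-the-leader''/potential telescoping gives
\begin{align*}
\sum_{t=1}^T \E_{f\sim q_t}\big[\ellhat_{t,f} - \ellhat_{t,f^*}\big] \;\le\; \frac{Z_T'}{\eta} + \eta\sum_{t=1}^T \E_{f\sim q_t}\big[(\ellhat_{t,f}-\ellhat_{t,f^*})^2\big] + (\text{lower-order}),
\end{align*}
where $Z_T'$ is (an instance-level version of) the log-partition term; taking expectations and being a little careful with the range of $\ellhat$ produces exactly the $Z_T/\eta$ head term in the statement. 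The only subtlety at this stage is that $\ellhat_{t,f}$ takes values in a bounded-but-not-$[0,1]$ range (the $\tfrac1{4\eta}$ scaling on the squared error, plus the $-\smax_j b_{t,j}f(x_t,j)\in[-1,0]$ bonus), so one uses the version of the inequality valid for $\eta\,\ellhat$ bounded, which is fine since $\eta\le 1$ makes $\eta\cdot\tfrac1{4\eta}(f(x_t,i_t)-c_t)^2 = \tfrac14(f-c)^2 \le \tfrac14$.

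Next I would translate the left-hand side back to actual regret. Decompose $\ellhat_{t,f} = \tfrac1{4\eta}(f(x_t,i_t)-c_t)^2 - \smax_j b_{t,j}f(x_t,j)$. Conditioning on the history up to round $t$ (so $x_t,b_t,f_t,i_t$ are fixed and only $c_t\sim\text{Bernoulli}(\rho_{t,i_t})$ is random), a direct computation of $\E[(f(x_t,i_t)-c_t)^2]$ in terms of $f(x_t,i_t)$ and $\rho_{t,i_t}$ shows that
\[
\E\big[(f(x_t,i_t)-c_t)^2 - (f^*(x_t,i_t)-c_t)^2 \,\big|\, \text{history}\big] \;=\; \big(f(x_t,i_t)-\rho_{t,i_t}\big)^2 \;\ge\; 0,
\]
using $f^*(x_t,i_t)=\rho_{t,i_t}$. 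Hence in expectation $\E[\ellhat_{t,f}-\ellhat_{t,f^*}] = \tfrac1{4\eta}\E[(f(x_t,i_t)-\rho_{t,i_t})^2] - \E[\smax_j b_{t,j}f(x_t,j) - \smax_j b_{t,j}f^*(x_t,j)]$. The second term, evaluated at the sampled $f=f_t$, is precisely (minus) the per-round revenue gap between following $f^*$ and following $f_t$: note $\smax_j b_{t,j}f^*(x_t,j) = \smax_j b_{t,j}\rho_{t,j}$ is the oracle payment (this is the identity already derived in \pref{sec: pre}), while $\E[c_t d_t \mid \text{history}]$ equals $\rho_{t,i_t}\cdot \tfrac{b_{t,j_t}f_t(x_t,j_t)}{f_t(x_t,i_t)}$, which I would relate to $\smax_j b_{t,j}f_t(x_t,j)$ up to a controllable error driven by $|f_t(x_t,i_t)-\rho_{t,i_t}|$. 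Rearranging, $\E[\Reg]$ is upper bounded by $\tfrac1\eta \cdot (\text{the LHS of the exp-weights bound}) = Z_T/\eta$, plus the second-order term $\eta\sum_t \E_{f\sim q_t}[(\ellhat_{t,f}-\ellhat_{t,f^*})^2]$, plus the error terms coming from the $f_t$-vs-$f^*$ payment comparison — and all of these I would bound by $\order(\eta \sum_t N_t)$.

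The main obstacle — and the step that deserves the most care — is the quadratic/variance term $\eta\sum_t \E_{f\sim q_t}[(\ellhat_{t,f}-\ellhat_{t,f^*})^2]$ together with the payment-comparison error terms, and showing their total is $\order(\eta\sum_t N_t)$ \emph{while not losing the nonnegative $\tfrac1{4\eta}(f-\rho)^2$ term}, which must be kept as a ``negative'' quantity available to absorb the exploration error. This is the analogue of the self-bounding / ``Feel-Good'' trick of \citet{zhang2022feel}: the squared-error surplus $\tfrac1{4\eta}\E[(f(x_t,i_t)-\rho_{t,i_t})^2]$ we extracted above is used to cancel terms of the form $|f_t(x_t,i_t)-\rho_{t,i_t}|$ arising both from the payment comparison and from the crude bound $(\ellhat_{t,f}-\ellhat_{t,f^*})^2 \le \order(1 + \tfrac1{\eta^2}(f(x_t,i_t)-\rho_{t,i_t})^2 + \dots)$ via AM-GM, after which $\eta$ times what remains is $\order(\eta N_t)$ per round because the bonus terms $\smax_j b_{t,j}f(x_t,j)$ live in $[0,1]$ and the number of ``active arms'' contributing is at most $N_t$. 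I would organize the bookkeeping so that the extracted $\tfrac1{4\eta}(f-\rho)^2$ exactly dominates all the $\pm$ linear-in-$|f-\rho|$ error contributions (this is where the constant $4$ and the $\eta\le 1$ restriction are used), leaving a clean $Z_T/\eta + \order(\eta\sum_t N_t)$ bound. The remaining steps — taking expectations to pass from the instance-level log-partition to $Z_T$, and handling the $\log$-of-expectation via Jensen where needed — are routine.
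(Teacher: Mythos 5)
Your overall architecture is the same as the paper's: you decompose the regret into a (decoupled) squared-error surplus at the realized winner plus the ``Feel-Good'' payment-comparison term $\FG_t$, you note that $\E_{c_t}[\ellhat_{t,f}-\ellhat_{t,f^*}] = \tfrac{1}{4\eta}(f(x_t,i_t)-f^*(x_t,i_t))^2 - \FG_t(f)$, and you obtain the $Z_T/\eta$ head term from a log-partition potential over $q_1$ (the paper does this in \pref{lem:mwu}). The place where you diverge is the technical heart: the paper never writes a generic second-order term $\eta\,\E_{f\sim q_t}[(\ellhat_{t,f}-\ellhat_{t,f^*})^2]$ at all. Instead, in \pref{lem:potential_diff} it controls the \emph{conditional} log-mgf of $-\eta(\ellhat_{t,f}-\ellhat_{t,f^*})$ directly, using sub-Gaussianity of the Bernoulli noise $c_t$ (which turns the exponent into $-\tfrac{7}{32}\LS_t + \eta\FG_t(f)$) and then a Cauchy--Schwarz split of the $\LS$ and $\FG$ contributions, retaining a net $-\tfrac{1}{16\eta}\E_{f\sim q_t}[\LS_t]$ that is then matched against the decoupling price $N_t/(4\mu)$ with $\mu = \tfrac{1}{16\eta}$.

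That substitution is where your proposal has a genuine gap. With the standard second-order exponential-weights inequality (coefficient $1$ on $\eta\,\E_{f\sim q_t}[g_{t,f}^2]$ for $\eta g_{t,f}\ge -1$), writing $g_{t,f} = \tfrac{1}{4\eta}(f-f^*)(f+f^*-2c_t) - \FG_t(f)$ at $(x_t,i_t)$ and taking the expectation over $c_t$, the pure quadratic part already costs at least $\tfrac{1}{8\eta}\E_{f\sim q_t}[\LS_t]$ and the cross term with $\FG_t(f)$ costs up to another $\tfrac{1}{2}\E_{f\sim q_t}[\LS_t]$ per round, against a total bias surplus of only $\tfrac{1}{4\eta}\E_{f\sim q_t}[\LS_t]$. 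For $\eta$ near the stated threshold $\eta\le 1$ the budget is exhausted (indeed exceeded), and nothing is left to pay the decoupling price that generates the $\order(\eta N_t)$ term --- so the theorem as stated does not follow from ``crude bound plus AM-GM''; at best you recover it for a sufficiently small constant upper bound on $\eta$, and you never verify the constants that you yourself identify as the crux. Relatedly, your absorption of the coupled linear error $|f_t(x_t,i_t)-\rho_{t,i_t}|$ by the decoupled surplus $\E_{f\sim q_t}[\LS_t]$ cannot be a single AM-GM: it requires the per-arm decoupling step $\E_{f\sim q_t}[\mathbbm{1}\{i_{t,f}=i\}\,|f-f^*|]\le \tfrac{1}{4\mu} + \mu\, p_{t,i}\,\E_{f\sim q_t}[(f-f^*)^2]$ summed over $i$, which is exactly where the factor $N_t$ enters (not because ``the bonus terms live in $[0,1]$''); you gesture at this but do not supply the mechanism. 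To close the proof along the paper's lines you need the mgf/sub-Gaussian treatment of the squared-error difference in place of the raw second moment.
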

Here, $Z_T$ should be treated as some complexity measure of the predictor class $\calF$.
As a concrete example, when $\calF$ is finite, we have the following corollary that exactly matches  \pref{thm:exp4}.

\begin{restatable}{corollary}{TS_finite}\label{cor:TS_finite}
For a finite $\calF$, we have $Z_T \leq \log |\calF|$, and thus the regret bound in \pref{thm:TS} becomes $\E[\Reg] = \order\Big(\log|\calF|+\sqrt{\sum_{t=1}^T N_t \log|\calF|}\Big)$ after picking the optimal $\eta$.
\end{restatable}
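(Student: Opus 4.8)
The plan is to bound the abstract complexity term $Z_T$ directly from its definition and then feed the result into \pref{thm:TS}. First I would observe that the sampling distribution at the first round is \emph{uniform} over $\calF$: since $q_{1,f}\propto\exp(-\eta\sum_{s<1}\ellhat_{s,f})=\exp(0)=1$, we have $q_{1,f}=1/|\calF|$ for every $f\in\calF$. Consequently $\E_{f\sim q_1}\exp\!\big(-\eta\sum_{t=1}^T(\ellhat_{t,f}-\ellhat_{t,f^*})\big)=\frac{1}{|\calF|}\sum_{f\in\calF}\exp\!\big(-\eta\sum_{t=1}^T(\ellhat_{t,f}-\ellhat_{t,f^*})\big)$.

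Next I would lower bound this average by retaining only the single summand $f=f^*$: each summand is an exponential, hence non-negative, and the $f^*$ summand equals $\exp(0)=1$, so the whole average is at least $1/|\calF|$. Applying the (order-reversing) map $x\mapsto-\log x$ to both sides gives $-\log\E_{f\sim q_1}\exp(\cdots)\le\log|\calF|$ along every realization of the algorithm's randomness, and taking expectation yields $Z_T\le\log|\calF|$, which is the first claim.

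Finally I would substitute this into \pref{thm:TS} to get $\E[\Reg]\le\frac{\log|\calF|}{\eta}+\order\!\big(\eta\sum_{t=1}^T N_t\big)$ subject to the constraint $\eta\le 1$. Tuning $\eta$: the unconstrained minimizer of the right-hand side is of order $\sqrt{\log|\calF|/\sum_t N_t}$; if this is at most $1$ we take it and obtain $\order\!\big(\sqrt{\sum_t N_t\log|\calF|}\big)$, whereas if it exceeds $1$ (equivalently $\log|\calF|>\sum_t N_t$) we instead set $\eta=1$ and obtain $\log|\calF|+\order(\sum_t N_t)=\order(\log|\calF|)$. Combining the two cases gives $\E[\Reg]=\order\!\big(\log|\calF|+\sqrt{\sum_{t=1}^T N_t\log|\calF|}\big)$.

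There is essentially no obstacle here: the only points needing any care are recognizing that $q_1$ is uniform (so that the normalizing constant contributes exactly the $\log|\calF|$ factor) and handling the clipped learning-rate optimization, neither of which is hard. All the real work is already done inside \pref{thm:TS}; this corollary merely translates its abstract complexity measure $Z_T$ into the finite-class bound, recovering exactly the guarantee of \pref{thm:exp4}.
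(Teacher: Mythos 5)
Your proof is correct and follows essentially the same route as the paper: it uses the uniformity of $q_1$ and lower bounds the expectation inside the logarithm by the single summand $f=f^*$ (which is $\exp(0)=1$), yielding $Z_T\leq\log|\calF|$, and then tunes $\eta$ in \pref{thm:TS}. Your explicit handling of the constraint $\eta\leq 1$ (which produces the additive $\log|\calF|$ term) is a slightly more careful spelling-out of the step the paper compresses into ``picking the optimal $\eta$,'' but it is not a different argument.
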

\begin{proof}
Since $q_1$ is uniform, we have
\begin{align*}
    Z_T &= -\E\left[\log\sum_{f\in\calF}\frac{1}{|\calF|}\exp\left(-\eta\sum_{t=1}^T\left(\ellhat_{t,f}-\ellhat_{t,f^*}\right)\right)\right]\\
    &\leq -\E\left[\log \frac{1}{|\calF|}\exp\left(-\eta\sum_{t=1}^T\left(\ellhat_{t,f^*}-\ellhat_{t,f^*}\right)\right)\right]  = \log|\calF|,
\end{align*}
finishing the proof.
\end{proof}

As another example, we consider a Lipschitz class in the next result (proof deferred to \pref{app: TS}).
\begin{restatable}{corollary}{TSLipschitz}\label{cor:TS_Lipschitz}
For some constants $\alpha, B \geq 0$ and a parametrized class $$\calF = \{f_\theta: \theta \in \Theta \subseteq [-B,B]^d, \text{$f_\theta$ is $\alpha$-Lipschitz with respect to $\|\cdot\|_\infty$ in $\theta$}\},$$ we have $Z_T = \order(\alpha +d\log BT)$, and thus the regret bound in \pref{thm:TS} becomes $$\E[\Reg] = \order\left(\sqrt{(\alpha+d\log(BT))\sum_{t=1}^TN_t}\right)$$ after picking the optimal $\eta$.  
\end{restatable}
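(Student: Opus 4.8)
The plan is to bound the complexity term $Z_T$ appearing in \pref{thm:TS} and then optimize $\eta$ exactly as in the proof of \pref{cor:TS_finite}; this reduces everything to showing $Z_T = \order(\alpha + d\log(BT))$. To get this I would run a standard localization (covering) argument in parameter space. By \pref{asm:realizability} fix a $\theta^*\in\Theta$ with $f_{\theta^*}=f^*$, pick a radius $r>0$ to be chosen later, and let $\calB=\{\theta\in\Theta:\norm{\theta-\theta^*}_\infty\le r\}$. Since $q_1=\unif(\Theta)$, lower-bounding the inner expectation in the definition of $Z_T$ by its integral over $\calB$ gives, for every realized interaction,
\[
-\log\E_{f\sim q_1}\exp\Big(-\eta\sum_{t=1}^T(\ellhat_{t,f}-\ellhat_{t,f^*})\Big) \;\le\; -\log q_1(\calB) + \sup_{\theta\in\calB}\;\eta\sum_{t=1}^T\big(\ellhat_{t,f_\theta}-\ellhat_{t,f^*}\big),
\]
and taking expectations, $Z_T \le -\log q_1(\calB) + \E\big[\eta\sum_{t=1}^T\sup_{\theta\in\calB}\abs{\ellhat_{t,f_\theta}-\ellhat_{t,f^*}}\big]$.

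The core step is a per-round stability bound for the \optsq estimator. For $\theta\in\calB$, $\alpha$-Lipschitzness gives $\sup_{x,i}\abs{f_\theta(x,i)-f^*(x,i)}\le\alpha r$. I would split $\ellhat_{t,f_\theta}-\ellhat_{t,f^*}$ into the scaled squared-error difference, bounded by $\tfrac{1}{4\eta}\cdot 2\alpha r=\tfrac{\alpha r}{2\eta}$ using $\abs{(p-c)^2-(q-c)^2}\le 2\abs{p-q}$ for $p,q\in[0,1]$ and $c\in\{0,1\}$, plus the bonus difference $\smax_j b_{t,j}f^*(x_t,j)-\smax_j b_{t,j}f_\theta(x_t,j)$. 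For the latter I would use that the $k$-th largest order statistic of a vector is $1$-Lipschitz with respect to $\norm{\cdot}_\infty$, together with $b_{t,j}\in[0,1]$, to get a bound of $\alpha r$. Hence $\sup_{\theta\in\calB}\abs{\ellhat_{t,f_\theta}-\ellhat_{t,f^*}}\le\tfrac{\alpha r}{2\eta}+\alpha r$, and since $\eta\le1$,
\[
\E\Big[\eta\sum_{t=1}^T\sup_{\theta\in\calB}\abs{\ellhat_{t,f_\theta}-\ellhat_{t,f^*}}\Big] \;\le\; \tfrac12\,\alpha r T + \eta\,\alpha r T \;\le\; \tfrac32\,\alpha r T.
\]

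The tuning is where a small amount of care is needed: I would take $r=1/T$, \emph{independent of $\alpha$}, so that the accumulated perturbation is $\order(\alpha)$ and $\alpha$ enters the bound \emph{additively} (an $\alpha$-dependent radius would instead push a factor of $\log\alpha$ inside the $d\log(\cdot)$ term). For the remaining term, $q_1(\calB)=\mathrm{vol}(\calB)/\mathrm{vol}(\Theta)\ge(r/B)^d$ since $\Theta\subseteq[-B,B]^d$ — up to an at-most-$2^{-d}$ factor when $\theta^*$ lies near $\partial\Theta$, which only contributes an additive $\order(d)$ — so $-\log q_1(\calB)\le d\log(BT)+\order(d)$. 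Combining the three displays gives $Z_T=\order(\alpha+d\log(BT))$; feeding this into \pref{thm:TS} and optimizing $\eta$ (as in \pref{cor:TS_finite}) yields the claimed $\E[\Reg]=\order(\sqrt{(\alpha+d\log(BT))\sum_{t=1}^TN_t})$. I expect the main obstacle to be the per-round stability bound: in particular, establishing Lipschitz continuity (in $f$) of the second-price bonus term $\smax_j b_{t,j}f(x_t,j)$, which relies on the $1$-Lipschitzness of order statistics under $\norm{\cdot}_\infty$ and on $b_{t,j}\le1$, and checking that the $\tfrac{1}{4\eta}$ scaling of the squared-error term does not blow up once it is multiplied back by $\eta$; lower-bounding $q_1(\calB)$ near the boundary of $\Theta$ is a minor technicality.
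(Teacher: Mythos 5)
Your proposal is correct and follows essentially the same route as the paper's proof: localize to an $\ell_\infty$-cube of radius $1/T$ around $\theta^*$, bound the per-round perturbation of the \optsq estimator (the $\tfrac{1}{4\eta}$-scaled squared-error part contributing $\order(\alpha r/\eta)$ and the $\smax$ bonus part contributing $\order(\alpha r)$, with the $\eta$ factors cancelling as you note), and pay $d\log(BT)$ for the volume ratio under the uniform prior $q_1$. The only cosmetic difference is that you invoke the general $1$-Lipschitzness of order statistics for the $\smax_j b_{t,j}f(x_t,j)$ term, whereas the paper proves the same inequality by a two-case analysis on whether $f_\theta$ and $f_{\theta^*}$ select the same winner.
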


\section{An Efficient Approach via a Regression Oracle}\label{sec: squarecb}

The second approach we take to derive a practically efficient algorithm follows a different trend of recent studies that reduce contextual bandits to some online regression problem and only access the predictor class $\calF$ through an efficient regression oracle~\citep{foster2020beyond,foster2021efficient,zhu2022contextual}.
More concretely, we assume access to an online regression oracle \AlgSq which follows the following protocol: at each round $t \in [T]$, the oracle selects randomly a function $f_t\in \calF$, then it
receives a tuple $(x_t, i_t, c_t) \in \calX \times [N] \times [0,1]$, potentially generated by an adaptive adversary, and suffers a squared  error $(f_t(x_t,i_t) - c_t)^2$.\footnote{For simplicity, we consider a \textit{proper} oracle here, but one can also use an \textit{improper} oracle that makes a prediction after seeing $(x_t,i_t)$, not necessarily following some $f_t\in\calF$, as in~\citep{foster2020beyond}.}
We assume that the oracle ensures some regret bound against the best predictor in $\calF$:

\begin{assumption}[Bounded Squared Error Regression Regret]
\label{asm:regression_oracle}
The regression oracle \AlgSq guarantees that for any (potentially adaptively chosen) sequence $\{(x_t, i_t, c_t)\}_{t\in[T]}$, the following is bounded by $\RegSq$:
\begin{equation*}
\begin{aligned}
\E\Big[\sum_{t=1}^T \left(f_t(x_t, i_t) - c_t\right)^2-\inf_{f \in \calF}\sum_{t=1}^T \left(f(x_t, i_t) - c_t\right)^2\Big]. 
\end{aligned}
\end{equation*}
\end{assumption}
There are many examples of such a regression oracle.
For instance, when $\calF$ is a $d$-dimensional linear class, Online Newton Step (ONS)~\citep{hazan2007logarithmic} achieves $\RegSq=\order(d\log T)$. We refer the reader to~\citep{foster2020beyond} for other examples, and point out that in practice (and in our experiments), such an oracle is often implemented by a simple gradient-based method.
The important point is that the regression problem does not require a balance between exploration and exploitation, and is thus generally easier than the original problem and is better studied with known efficient algorithms.
This also means that to reduce the original problem to its regression counterpart, the key is to figure out an appropriate exploration strategy.

\citet{foster2021statistical} provide a general framework to find the optimal exploration strategy using a concept called \textit{Decision-Estimation Coefficient} (DEC).
For our problem, it boils down to understanding the following quantity
\begin{equation}\label{eqn:dec}
\begin{split}
&\dec(\wh{\rho}) = \min_{Q}\max_{\rho \in [0,1]^N,b \in [0,1]^N}\\
&\qquad\E_{\wt{\rho}\in Q}\Bigg[ \underbrace{\left(\smax_{i\in[N]} b_i \rho_i - \rho_{i^*}\frac{\smax_{j \in [N]} b_j \wt{\rho}_j }{\wt{\rho}_{i^*}} \right)}_{\text{per-round regret}} - \gamma\underbrace{\left(\rho_{i^*} - \wh{\rho}_{i^*} \right)^2}_{\text{squared error}}\Bigg]
\end{split}
\end{equation}
where the $\min$ is over all possible distributions $Q$ over the set $[0,1]^N$, $i^* = \argmax_i b_i\wt{\rho}_i$ is the winner according to $\wt{\rho}$, $\gamma > 0$ is some coefficient, and $\wh{\rho} \in [0,1]^N$ is a given CTR prediction (provided by the regression oracle).
In words, $\dec(\wh{\rho})$ measures how small the gap can be made between the per-round regret of the learner using a randomized strategy $Q$ and the squared prediction error of the oracle in the worst case.
Understanding this quantity provides both an algorithm and a corresponding regret bound, as shown below.
\begin{proposition}\label{prop:dec}
Suppose that at each round $t$, 1) the oracle \AlgSq outputs a CTR predictor $f_t\in\calF$; 2) the learner then randomly sets $\wt{\rho}_t$ according to the distribution that realizes the $\min_Q$ in the definition of $\dec(\wh{\rho}_t)$ where $\wh{\rho}_{t,i}=f_t(x_t, i)$; 3) finally the oracle is fed with the tuple $(x_t, i_t, c_t)$.
Then the regret of the learner satisfies $\E[\Reg] \leq \gamma \RegSq + \E\left[\sum_{t=1}^T \dec(\wh{\rho}_t)\right]$.
\end{proposition}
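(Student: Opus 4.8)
\emph{Proof plan.} The plan is to bound $\E[\Reg]$ round by round. At each round I add and subtract the squared prediction error $(\rho_{t,i_t}-\wh{\rho}_{t,i_t})^2$ appearing inside $\dec$, which splits the per-round expected regret into (a) a ``$\dec$ part'' controlled immediately because $\wt{\rho}_t$ is drawn from the distribution attaining $\min_Q$ in $\dec(\wh{\rho}_t)$, and (b) a ``regression part'' that I convert into the oracle's squared-error regret via a standard bias--variance identity, after which \pref{asm:regression_oracle} closes the argument.

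\emph{Per-round analysis and the $\dec$ part.} Fix a round $t$ and condition on everything generated before the learner's draw $\wt{\rho}_t$; this fixes the context $x_t$, the oracle's output $f_t$ (hence $\wh{\rho}_{t,i}=f_t(x_t,i)$), and, by realizability, the true vector $\rho_t=f^*(x_t,\cdot)$. Since the bids $b_t$ are chosen without knowledge of $\wt{\rho}_t$, we may further condition on $b_t$ without altering the law $\wt{\rho}_t\sim Q_t$, where $Q_t$ realizes $\min_Q$ in $\dec(\wh{\rho}_t)$. Under this conditioning the only remaining randomness is $\wt{\rho}_t\sim Q_t$ and the click $c_t\sim\mathrm{Bernoulli}(\rho_{t,i_t})$ with $i_t=\argmax_i b_{t,i}\wt{\rho}_{t,i}$ the winner. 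Taking the expectation over $c_t$ first gives $\E[c_td_t\mid\wt{\rho}_t]=\rho_{t,i_t}\cdot\frac{\smax_j b_{t,j}\wt{\rho}_{t,j}}{\wt{\rho}_{t,i_t}}$, so the conditional expected per-round regret equals $\E_{\wt{\rho}_t\sim Q_t}\!\big[\smax_i b_{t,i}\rho_{t,i}-\rho_{t,i_t}\frac{\smax_j b_{t,j}\wt{\rho}_{t,j}}{\wt{\rho}_{t,i_t}}\big]$, which is exactly the ``per-round regret'' term of \pref{eqn:dec} instantiated with $N_t$ ads and at the feasible point $(\rho,b)=(\rho_t,b_t)$ of the inner $\max$. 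Adding and subtracting $\gamma\,\E_{\wt{\rho}_t\sim Q_t}[(\rho_{t,i_t}-\wh{\rho}_{t,i_t})^2]$, the piece that keeps $-\gamma(\cdot)^2$ inside the expectation over $Q_t$ is at most $\dec(\wh{\rho}_t)$ because $Q_t$ attains the outer $\min$, and what remains is $\gamma\,\E[(f^*(x_t,i_t)-f_t(x_t,i_t))^2\mid\text{above}]$, in which $i_t$ is precisely the winner handed to the oracle in step 3 of the protocol.

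\emph{The regression part.} Summing over $t$ and taking total expectations, it suffices to show $\sum_{t=1}^T\E[(f^*(x_t,i_t)-f_t(x_t,i_t))^2]\le\RegSq$. Since $c_t$ is conditionally $\mathrm{Bernoulli}$ with mean $f^*(x_t,i_t)$ given the past and $(x_t,i_t,f_t)$, the elementary identity $\E[(f_t(x_t,i_t)-c_t)^2-(f^*(x_t,i_t)-c_t)^2\mid\text{past}]=(f_t(x_t,i_t)-f^*(x_t,i_t))^2$ holds (the cross term linear in $c_t-f^*(x_t,i_t)$ has zero conditional mean). Summing over $t$ and replacing the infimum in \pref{asm:regression_oracle} by its value at $f^*\in\calF$ gives $\sum_{t}\E[(f^*(x_t,i_t)-f_t(x_t,i_t))^2]\le\RegSq$. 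Combining with the per-round bound from the previous step yields $\E[\Reg]\le\gamma\RegSq+\E\big[\sum_{t=1}^T\dec(\wh{\rho}_t)\big]$, as claimed.

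\emph{Main obstacle.} The argument is mostly bookkeeping, so the main thing to get right is the filtration: one must condition so that (i) the possibly adversarial $b_t$, chosen without seeing $\wt{\rho}_t$, is genuinely covered by the worst-case $\max_b$ in $\dec(\wh{\rho}_t)$ rather than interacting with the learner's randomization, and (ii) the index in the squared-error term of $\dec$ coincides with the winner $i_t$ fed to the regression oracle, so that the bias--variance identity lines up with \pref{asm:regression_oracle} with no slack. Once the conditioning is set up this way, no further difficulty arises.
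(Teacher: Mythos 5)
Your proof is correct and takes essentially the same route as the paper's: bound the per-round regret by $\dec(\wh{\rho}_t)$ plus $\gamma$ times the squared error $(\rho_{t,i_t}-\wh{\rho}_{t,i_t})^2$ using that $\wt{\rho}_t$ is drawn from the minimizing distribution, then bound the accumulated squared error by $\RegSq$ via the zero-mean cross-term identity for the Bernoulli $c_t$ together with realizability ($f^*\in\calF$). The paper's proof is simply a terser version of yours; your explicit handling of the conditioning on $b_t$ (chosen without seeing $\wt{\rho}_t$, hence covered by the worst-case $\max_b$ in $\dec$) spells out a detail the paper leaves implicit.
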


For example, if $\dec(\wh{\rho}_t)$ turns out to be $\order(\frac{N}{\gamma})$ for all $t$, then picking the optimal $\gamma$ leads to $\E[\Reg]= \order(\sqrt{NT \RegSq})$ (which, for linear class, implies $\E[\Reg]= \order(\sqrt{dNT\log T})$ if ONS is used as the oracle).
For contextual bandits, the corresponding DEC is indeed shown to be $\order(\frac{N}{\gamma})$~\citep{foster2020beyond}.

\paragraph{An upper bound and a simple algorithm.}
Unfortunately, we are unable to show that our DEC is of order $1/\gamma$ due to the very complicated second-price structure.
Nevertheless, we can show a worse bound $\dec(\wh{\rho}_t) = \order(\sqrt{\nicefrac{N}{\gamma}})$. 
This is achieved by a simple $\epsilon$-greedy strategy (with $\epsilon = \sqrt{\nicefrac{N}{\gamma}}$) --- let $Q$ concentrate on $N+1$ CTR predictions: the greedy one $\wh{\rho}$ with probability $1-\epsilon$ (exploitation), and the one-hot CTR predictions $e_1, \ldots, e_N$, each with probability $\epsilon/N$ (exploration, since when $\wt{\rho} = e_i$, ad $i$ will always be selected as the winner, and we observe signal on its CTR).\footnote{Note that in this case the payment is always $0$, which, practically speaking,  might not be desirable. However, since  in reality there is always a minimum allowed bid $\sigma > 0$, we can in fact replace $e_i$ by $\frac{\sigma}{2}\cdot\one+(1-\frac{\sigma}{2})e_i$ instead, which still ensures exploration of ad $i$ while leading to at least $\sigma^2/2$ payment per click.}
Put together, this leads to \pref{alg:squareCB.A} and the following result, whose analysis  requires a careful treatment to the second-price structure despite the simplicity of the algorithm (see \pref{app: squarecb}).

\begin{algorithm*}[t]
\caption{Epsilon Greedy with an Online Regression Oracle}\label{alg:squareCB.A}

Input: exploration parameter $\epsilon>0$ and an online regression oracle \AlgSq satisfying \pref{asm:regression_oracle}

\For{$t=1,2,\dots,T$}{
    Obtain a CTR predictor $f_t$ from the oracle $\AlgSq$. 
    
    Receive context $x_t$ and the set of $N_t$ bidders.

    With $\epsilon$ probability, pick $i\in[N_t]$ uniformly at random and set $\wt{\rho}_t=e_i\in\R^{N_t}$; with the remaining $1-\epsilon$ probability, set $\wt{\rho}_{t,i}=f_t(x_t,i)$ for all $i\in[N_t]$
    
    Receive bids $b_t\in[0,1]^{N_t}$.
    
    Select the bidder $i_t=\argmax_{i\in[N_t]}b_{t,i}\wt{\rho}_{t,i}$ as the winner, with payment per click  $d_t=\frac{b_{t,j_t}\wt{\rho}_{t,j_t}}{\wt{\rho}_{t,i_t}}$ where $j_t = \arg\smax_{i\in[N_t]}b_{t,i}\wt{\rho}_{t,i}$ is the runner-up.

    Receive feedback $c_t=\mathbbm{1}\{\text{ad $i_t$ is clicked}\}$ and payment $c_td_t$.
    
    Feed the tuple $(x_t,i_t,c_t)$ to the oracle $\AlgSq$.
}
\end{algorithm*}
\begin{restatable}{theorem}{squareCB}\label{thm:squareCB}
For any $\wh{\rho}_t \in [0,1]^N$, we have $\dec(\wh{\rho}) = \order(\sqrt{\nicefrac{N}{\gamma}})$, evidenced by the $\epsilon$-greedy strategy described above.
Consequently,  \pref{alg:squareCB.A} with $\epsilon=T^{-\frac{1}{3}}(N\RegSq)^{\frac{1}{3}}$ 
guarantees that
$
    \E[\Reg] 
    = \order(T^{\frac{2}{3}}(N\RegSq)^{\frac{1}{3}}).
$
\end{restatable}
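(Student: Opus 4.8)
The plan is to first establish the bound $\dec(\wh{\rho})=\order(\sqrt{N/\gamma})$ for every $\wh{\rho}\in[0,1]^N$ using the $\epsilon$-greedy distribution, and then read off the regret bound. The latter step is routine: the $\epsilon$-greedy $Q$ attains the value $\order(\sqrt{N/\gamma})$ in the inner $\max$ of~\pref{eqn:dec}, so \pref{alg:squareCB.A} runs the scheme of~\pref{prop:dec} with this near-minimizer in place of the exact $\dec$-minimizer, giving $\E[\Reg]\le\gamma\RegSq+\order(T\sqrt{N/\gamma})$; balancing the two terms, i.e.\ taking $\gamma=\Theta\bigl((T\sqrt{N}/\RegSq)^{2/3}\bigr)$ --- equivalently $\epsilon=\sqrt{N/\gamma}=\Theta(T^{-1/3}(N\RegSq)^{1/3})$ --- yields $\E[\Reg]=\order(T^{2/3}(N\RegSq)^{1/3})$, and whenever this $\epsilon$ would exceed $1$ the trivial bound $\Reg\le T$ is already of that order. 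So everything reduces to bounding $\dec$.

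To bound $\dec(\wh{\rho})$, I would plug the $\epsilon$-greedy $Q$ (weight $1-\epsilon$ on $\wh{\rho}$, weight $\epsilon/N$ on each standard basis vector $e_k$) into~\pref{eqn:dec} and simplify the inner expectation. Abbreviate $R:=\smax_i b_i\rho_i$, $\delta_i:=\rho_i-\wh{\rho}_i$, $\wh{\imath}:=\argmax_i b_i\wh{\rho}_i$ (we may assume $R>0$, else the inner expectation is non-positive). For $\wt{\rho}=e_k$ the winner is ad $k$ and the runner-up score $\smax_j b_j(e_k)_j$ is $0$, so the payment is $0$, the per-round regret is exactly $R$, and the squared-error term is $\gamma\delta_k^2$ (ties when $b_k=0$ are irrelevant to what follows); for $\wt{\rho}=\wh{\rho}$ the winner is $\wh{\imath}$, the payment is $P:=\rho_{\wh{\imath}}\cdot\frac{\smax_j b_j\wh{\rho}_j}{\wh{\rho}_{\wh{\imath}}}\in[0,1]$ (read as $0$ when $\wh{\rho}_{\wh{\imath}}=0$, i.e.\ all predicted scores vanish), and the squared-error term is $\gamma\delta_{\wh{\imath}}^2$. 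Summing, the inner expectation equals $(R-P)+\epsilon P-(1-\epsilon)\gamma\delta_{\wh{\imath}}^2-\tfrac{\gamma\epsilon}{N}\sum_{k=1}^N\delta_k^2$; since $\epsilon P\le\epsilon=\Theta(\sqrt{N/\gamma})$, it suffices to prove, for all $\rho,b$, that $R-P\le(1-\epsilon)\gamma\delta_{\wh{\imath}}^2+\tfrac{\gamma\epsilon}{N}\sum_{k=1}^N\delta_k^2+\order(\sqrt{N/\gamma})$.

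The crux, which I expect to be the main obstacle, is the deterministic pointwise inequality $R-P\le|\delta_a|+|\delta_{a'}|+|\delta_{\wh{\imath}}|$, where $a=\argmax_i b_i\rho_i$ and $a'=\argsmax_i b_i\rho_i$ realize $R=b_{a'}\rho_{a'}$. Its proof transfers the $\rho$-world scores to the $\wh{\rho}$-world: since at least one of $a,a'$ differs from $\wh{\imath}$, one gets $\smax_j b_j\wh{\rho}_j\ge R-\mu$ with $\mu:=\max(|\delta_a|,|\delta_{a'}|)$ (the $\wh{\rho}$-runner-up score is at least the $\wh{\rho}$-score of that arm, which is at least its $\rho$-score minus $|\delta|$, which is at least $R-\mu$ because $b_a\rho_a\ge R$), and the same chain gives $\wh{\rho}_{\wh{\imath}}\ge b_{\wh{\imath}}\wh{\rho}_{\wh{\imath}}\ge b_a\wh{\rho}_a\ge R-\mu$. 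If $R\le\mu$ then $R-P\le R\le\mu$ and we are done (this also subsumes $\wh{\rho}_{\wh{\imath}}=0$, which forces $\smax_j b_j\wh{\rho}_j=0$ and hence $R\le\mu$); otherwise $\wh{\rho}_{\wh{\imath}}\ge R-\mu>0$ and
\[
P=\frac{\rho_{\wh{\imath}}}{\wh{\rho}_{\wh{\imath}}}\,\smax_j b_j\wh{\rho}_j\ge\Bigl(1-\frac{|\delta_{\wh{\imath}}|}{\wh{\rho}_{\wh{\imath}}}\Bigr)(R-\mu)\ge R-\mu-|\delta_{\wh{\imath}}|,
\]
the last step again using $\wh{\rho}_{\wh{\imath}}\ge R-\mu$; the only delicate point is this division by the possibly tiny $\wh{\rho}_{\wh{\imath}}$, which is exactly what the bound $\wh{\rho}_{\wh{\imath}}\gtrsim R$ controls. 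Finally I would combine this with the elementary inequality $|x|\le c x^2+\tfrac{1}{4c}$: with $c=(1-\epsilon)\gamma$ the term $|\delta_{\wh{\imath}}|$ contributes $(1-\epsilon)\gamma\delta_{\wh{\imath}}^2+\order(1/\gamma)$, and with $c=\gamma\epsilon/N$ each of $|\delta_a|,|\delta_{a'}|$ contributes $\tfrac{\gamma\epsilon}{N}\delta_a^2$ resp.\ $\tfrac{\gamma\epsilon}{N}\delta_{a'}^2$ (both absorbed into $\tfrac{\gamma\epsilon}{N}\sum_k\delta_k^2$ since $a\ne a'$) plus a residual $\tfrac{N}{4\gamma\epsilon}=\order(\sqrt{N/\gamma})$, using $\epsilon=\Theta(\sqrt{N/\gamma})$. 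This closes the bound on $\dec(\wh{\rho})$ and hence the theorem.
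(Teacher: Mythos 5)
Your proposal is correct and follows essentially the same route as the paper's proof: certify the DEC with the $\epsilon$-greedy distribution, control the per-round regret gap by the prediction errors at the true top-two ads and the predicted winner, absorb those errors into the squared-error terms via AM-GM against selection probabilities at least $\epsilon/N$, and then balance $\gamma\RegSq$ against $T\sqrt{N/\gamma}$. The only differences are organizational: you isolate the pointwise bound $R-P\le|\delta_a|+|\delta_{a'}|+|\delta_{\wh{\imath}}|$ and tame the division by $\wh{\rho}_{\wh{\imath}}$ via the lower bound $\wh{\rho}_{\wh{\imath}}\ge R-\mu$ (splitting on $R\le\mu$), whereas the paper splits on whether the predicted winner coincides with the true runner-up and controls the ratio $\smax_j b_j\wh{\rho}_j/\wh{\rho}_{\wh{\imath}}\le b_{\wh{\imath}}\le 1$ directly inside its AM-GM step.
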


\paragraph{Discussion and conjecture.}
Despite the worse regret bound, one nice property of \pref{alg:squareCB.A} is that it in fact never uses the bid information in deciding $\wt{\rho}_t$, since the oracle never receives the bid information as feedback.
This is in sharp contrast with \pref{alg:EXP4} where both \ips and \optsq are defined in terms of the bids.
This property is useful when the second-price auction is actually run by a third party who collects the bids from the advertisers without revealing them to the platform that makes the CTR predictions.

In fact, this property is inherent in this DEC approach and not just because of our choice of a potentially suboptimal $\epsilon$-greedy strategy.
Indeed, even for the algorithm described in \pref{prop:dec} that solves the $\min_Q$ part of \pref{eqn:dec} exactly, it still by definition does not use the bid information but only the predictions from the oracle (which, again, is independent of any bids from prior rounds).
We conjecture that because of this aspect, the optimal bound of $\dec(\wh{\rho})$ might indeed be $\Theta(\sqrt{N/\gamma})$, achieved by the simple $\epsilon$-greedy strategy.
We leave this question for future investigation.

\section{Experiments}\label{sec:experiment}

We implement the two efficient algorithms we propose, namely~\pref{alg:EXP4} with \optsq  loss estimators and~\pref{alg:squareCB.A}, and demonstrate their superior performance on a synthetic dataset compared to three simple baselines: 1) a strategy that always make random CTR predictions, 2) a strategy that always make the same CTR prediction for all ads, which is equivalent to running a second-price auction using only the bids, and 3) a variant of \pref{alg:EXP4} that uses the \optsq estimator but removes its optimistic part $\smax_j f(x_t,j)b_{t,j}$ (denoted by $\sq$).

For \pref{alg:EXP4}, $\eta$ is chosen from $\{\frac{1}{16}, \frac{1}{8}, \frac{1}{4}\}$, and SGLD \pref{eqn:SGLD} is run for $32$ steps per round with a step size $\alpha$ chosen from $\{0.0005,0.001,0.005,0.01,0.05\}$. For \pref{alg:squareCB.A}, the parameter $\epsilon$ is chosen from $\{T^{-\frac{1}{3}}, 2T^{-\frac{1}{3}}, 4T^{-\frac{1}{3}}\}$, and the regression oracle is simply implemented by online gradient descent~\citep{zinkevich2003online} with a learning rate from $\{0.001,0.005\}$.

We use one-hidden-layer neural nets for the CTR predictors.
Specifically, at each time $t$, our context $x_t$ is a matrix in $\R^{d \times (1+N_t)}$, where the
first column $x_{t,0}$ represents some common information shared by all advertisers (such as the user query), and the $(i+1)$-th column $x_{t,i}$ represents the feature of ad $i$.
Our predictor class is then $\calF = \{f_\theta: f_\theta(x,i)=\sigmoid(\theta_0^\top x_0 + \theta^\top x_i), \theta_0, \theta\in [-1,1]^{d}\}$
where for any $u\in \R$, $\sigmoid(u)=\frac{1}{1+e^{-u}}$.

\paragraph{Synthetic dataset construction.}
We generate a synthetic dataset by choosing $d=128$ and $T=10^4$ and uniformly at random sampling $N_t$ from $5$ to $10$, $x_t$ from $[-1,1]^{d\times (1+N_t)}$, and $b_t$ from $[0.1,1]$.
To generate the underlying CTR predictor $f^*$, we first generate some fake CTRs uniformly at random from $[0.2,1]$ for each $t$ and then use the full dataset to fit a model from $\calF$ as the final $f^*$.
To avoid the trivial fixed CTR strategy already performing very well, we also make a final adjustment and set the bid of the ad with the lowest CTR to be $1$ (imagine a little-known new brand trying to bid high to get more impressions).

\paragraph{Results.}
We repeat our experiment with $4$ different random seeds 
and plot in \pref{fig:regret_synthetic} the average and the standard deviation (as shaded areas) of the cumulative regret over these $4$ trials under the best hyperparameters. From \pref{fig:regret_synthetic}, we see that 
1) as our theory indicates, \pref{alg:EXP4} \optsq indeed suffers lower regret than \pref{alg:squareCB.A}; 
2) both of our algorithms beat the three baselines with a large margin;
3) \pref{alg:EXP4} \sq is only comparable to the random CTR strategy, demonstrating that the optimistic part of \optsq is indeed both theoretically and empirically critical.

\begin{figure}[t]
  \centering  \includegraphics[width=0.45\textwidth]{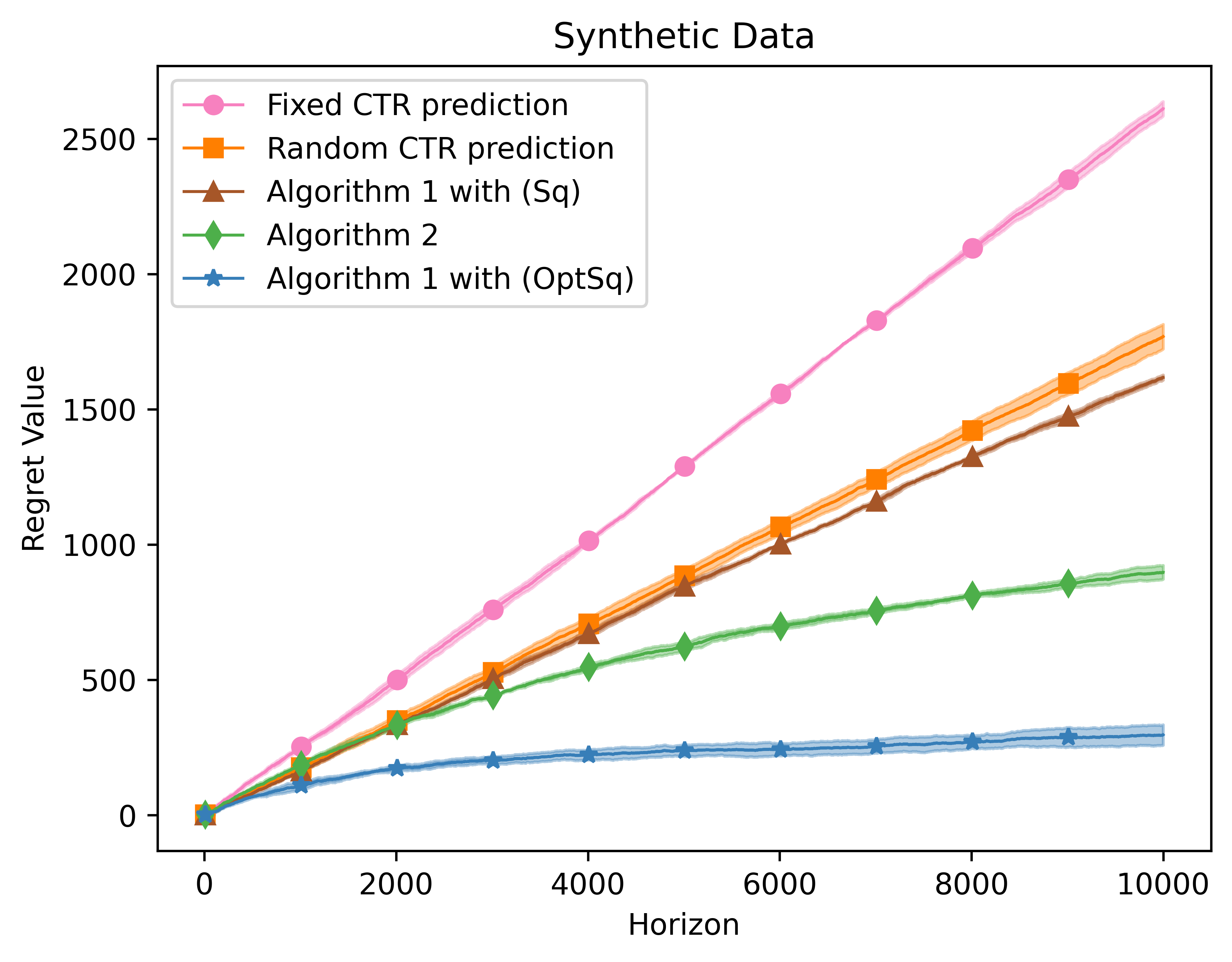}
  \caption{Performance of different algorithms on synthetic data}
  \label{fig:regret_synthetic}
\end{figure}

\bibliography{bibfile}
\bibliographystyle{plainnat}

\newpage
\appendix
\section{Omitted Proofs in~\pref{sec:EXP4}}\label{app:EXP4}
\begin{proof}[Proof of \pref{thm:exp4}]
    According to standard analysis of exponential weight (see for example~\citep[Lemma~14]{wei2020taking}), we have for any $f_0 \in \calF$:
    \begin{align*}
        \inner{q_t-e_{f_0}, \ellhat_t} &\leq \frac{1}{\eta} D(e_{f_0}, q_t) - \frac{1}{\eta} D(e_{f_0}, q_{t+1}) + \frac{\eta}{2}\sum_{f\in \calF}q_{t,f}\ellhat_{t,f}^2,
    \end{align*}
    where $D(u,v)=\sum_{f\in\calF} u_f\log\frac{u_f}{v_f}$ is the KL divergence between $u$ and $v$. Taking summation over $t\in [T]$, we have with $i_{t,f}=\argmax_ib_{t,i}f(x_t,i)$:
    \begin{align*}
        &\sum_{t=1}^T\inner{q_t-e_{f_0}, \ellhat_t} \\
        &\leq \frac{D(e_{f_0}, q_1)}{\eta} + \frac{\eta}{2}\sum_{t=1}^T\sum_{f\in \calF}q_{t,f}\ellhat_{t,f}^2 \\
        &= \frac{D(e_{f_0}, q_1)}{\eta} + \frac{\eta}{2}\sum_{t=1}^T\sum_{i\in [N_t]}\sum_{f\in \calF, i=i_{t,f}}q_{t,f}\ellhat_{t,f}^2 \\
        &= \frac{D(e_{f_0}, q_1)}{\eta} + \frac{\eta}{2}\sum_{t=1}^T\sum_{i\in[N_t]}\sum_{f\in \calF, i=i_{t,f}}\frac{q_{t,f}}{p_{t,i}^2}\left(1-c_t\cdot\frac{\smax_j b_{t,j}f(x_t,j)}{f(x_t, i_{t,f})}\right)^2\cdot\mathbbm{1}\{i_t=i\} \\
        &\leq \frac{D(e_{f_0}, q_1)}{\eta} + \frac{\eta}{2}\sum_{t=1}^T\sum_{i\in[N_t]}\sum_{f\in\calF, i=i_{t,f}}\frac{q_{t,f}\mathbbm{1}\{i_t=i\}}{p_{t,i}^2} \\
        &= \frac{D(e_{f_0}, q_1)}{\eta} + \frac{\eta}{2}\sum_{t=1}^T\sum_{i\in[N_t]}\frac{\mathbbm{1}\{i_t=i\}}{p_{t,i}} \tag{$p_{t,i} = \sum_{f\in\calF: i=i_{t,f}}q_{t,f}$}
    \end{align*}
    where the inequality is because $\smax_j b_{t,j}f(x_t,j) \leq b_{t,i_{t,f}}f(x_t, i_{t,f})$ and thus $1-c_t\cdot\frac{\smax_j b_{t,j}f(x_t,j)}{f(x_t, i_{t,f})} \in [0,1]$.  
    Taking expectation over both sides and noticing 
    \begin{align*}
        &\E\left[\frac{\mathbbm{1}\{i_t=i\}}{p_{t,i}}\right] = \E\left[\frac{p_{t,i}}{p_{t,i}}\right] = 1,\\
        \E\left[\ellhat_{t,f}\right] 
        &= \E\left[\frac{\mathbbm{1}\{i_t=i_{t,f}\}}{p_{t,i_t}}\left(1-\rho_{t,i_{t,f}}\cdot\frac{\smax_j b_{t,j}f(x_t,j)}{f(x_t,i_{t,f})}\right)\right] \\
        &= 1 - \rho_{t,i_{t,f}}\cdot \frac{\smax_j b_{t,j}f(x_t,j)}{f(x_t,i_{t,f})} \\ 
        &= 1 - f^*(x_t,i_{t,f})\cdot \frac{\smax_j b_{t,j}f(x_t,j)}{f(x_t,i_{t,f})}, 
    \end{align*}
    we obtain
    \begin{equation}\label{eqn:exp4-regret}
    \begin{split}
    &\E\left[\sum_{t=1}^T f^*(x_t,i_{t,f_0})\cdot \frac{\smax_j b_{t,j}f_0(x_t,j)}{f_0(x_t,i_{t,f_0})}  - \sum_{t=1}^T f^*(x_t,i_{t,f_t})\cdot \frac{\smax_j b_{t,j}f_t(x_t,j)}{f_t(x_t,i_{t,f_t})} \right]   \\
     &= \E\left[\sum_{t=1}^T f^*(x_t,i_{t,f_0})\cdot \frac{\smax_j b_{t,j}f_0(x_t,j)}{f_0(x_t,i_{t,f_0})}  - \sum_{t=1}^T c_td_t \right]  \leq \frac{D(e_{f_0},q_1)}{\eta} + \frac{\eta}{2}\sum_{t=1}^TN_t.
    \end{split}\end{equation}
    Picking $f_0 = f^*$, which is in the class $\calF$ by \pref{asm:realizability}, the left-hand side above becomes exactly the expected regret $\E\left[\Reg\right]$.
    Finally, since $q_1$ is a uniform distribution over $\calF$, we know that $D(e_{f^*},q_1)=\log|\calF|$, and picking $\eta=\sqrt{\frac{\log|\calF|}{\sum_{t=1}^TN_t}}$ then leads to the claimed regret bound.
\end{proof}

\begin{proof}[Proof of \pref{cor:noncontextual}]
    For any $u\in[0,1]$, define $\lfloor u\rfloor_T = \frac{1}{T}\lfloor Tu\rfloor$. Since we consider the non-contextual setting, we have $\rho=\rho_t$ for all $t\in[T]$ and the policy $f\in\wh{\calF}$ can be represented as a vector in $[0,1]^N$.
    To show that~\pref{alg:EXP4} guarantees $\order(N\sqrt{T\log N})$ regret, let $\wh{f}^*=(\lfloor \rho_1\rfloor_T, \dots,\lfloor \rho_N\rfloor_T)\in \wh{\calF}$. Using~\pref{eqn:exp4-regret} and picking $f_0=\wh{f}^*$, we know that
    \begin{align*}
    \E\left[\sum_{t=1}^T \rho_{i_{t,\wh{f}^*}} \cdot \frac{b_{t,j_{t,\wh{f}^*}}\wh{f}^*_{j_{t,\wh{f}^*}}}{\wh{f}^*_{i_{t,\wh{f}^*}}}  - \sum_{t=1}^T c_t d_t \right] \leq \order\left(\frac{N\log T}{\eta}\right) + \frac{\eta NT}{2},
    \end{align*}
    where $i_{t,\rho}=\argmax_ib_{t,i}\rho_{t,i}$ and $j_{t,\rho}=\argsmax_ib_{t,i}\rho_{t,i}$. 
    It remains to argue that the first term above is close to $\sum_{t=1}^T \smax_i b_{t,i}\rho_i$, the revenue of the oracle strategy.    
    To this end, for each time $t$ we consider two cases depending on whether $\wh{f}^*$ and $\rho$ selects the same winner.
    If $i_{t,\wh{f}^*}=i_{t,\rho}$, 
    we have
    \begin{align*}
        \rho_{i_{t,\wh{f}^*}}\cdot\frac{b_{t,j_{t,\wh{f}^*}}\wh{f}^*_{j_{t,\wh{f}^*}}}{\wh{f}^*_{i_{t,\wh{f}^*}}}  &\geq b_{t,j_{t,\wh{f}^*}}\wh{f}^*_{j_{t,\wh{f}^*}} \tag{since $\wh{f}^*_i = \lfloor \rho_i \rfloor_T \leq \rho_i$} \\
        & \geq b_{t,j_{t,\rho}}\wh{f}^*_{j_{t,\rho}} \tag{since $b_{t,j_{t,\wh{f}^*}}\wh{f}^*_{j_{t,\wh{f}^*}}=\smax_ib_{t,i}\wh{f}^*_i$ and $j_{t,\rho} \neq i_{t,\rho} = i_{t,\wh{f}^*}$}\\
        & \geq b_{t,j_{t,\rho}}\rho_{j_{t,\rho}} - \frac{1}{T} \tag{since $\wh{f}^*_i = \lfloor \rho_i \rfloor_T \geq \rho_i - \frac{1}{T}$}\\
        & = \smax_ib_{t,i}\rho_i - \frac{1}{T}.
    \end{align*}
    Otherwise, we know that $b_{t,j_{t,\wh{f}^*}}\wh{f}^*_{j_{t,\wh{f}^*}} \geq b_{t,i_{t,\rho}}\wh{f}^*_{i_{t,\rho}}\geq b_{t,i_{t,\rho}}\rho_{i_{t,\rho}} - \frac{1}{T}\geq \smax_ib_{t,i}\rho_i - \frac{1}{T}$. Combining both cases, we know that 
    \begin{align*}
    &\E\left[\sum_{t=1}^T\smax_ib_{t,i}\rho_i - \sum_{t=1}^T c_t d_t\right] 
    \leq 
\E\left[\sum_{t=1}^T  \rho_{i_{t,\wh{f}^*}}\cdot\frac{b_{t,j_{t,\wh{f}^*}}\wh{f}^*_{j_{t,\wh{f}^*}}}{\wh{f}^*_{i_{t,\wh{f}^*}}} - \sum_{t=1}^T c_t d_t\right] +1\\
&\leq \order\left(\frac{N\log T}{\eta}\right) + \frac{\eta NT}{2} +1.
    \end{align*}
    Picking $\eta=\sqrt{\frac{\log T}{T}}$ achieves the claimed bound.
\end{proof}

\begin{proof}[Proof of \pref{thm:noncontextual_lower_bound}]
The proof follows the idea of the lower bound construction in multi-armed bandits. Consider an instance of the auction problem with $N_t=N$ bidders and CTR $\{\rho_i\}_{i=1}^N$. The bid vector $b_t$ at each round $t$ is $\one$. Denote $\rho^{(i,j)}$ $(i\neq j)$ to be the environment where $\rho_{k}^{(i,j)}=\frac{1}{2}$ for all $k\in [N]\backslash\{i,j\}$ and $\rho_{i}^{(i,j)}=\rho_j^{(i,j)}=\frac{1}{2}+\epsilon$ for some $\epsilon>0$ to be specified later. We also denote $\rho^{(0)}$ to be the environment where $\rho^{(0)}_i=\frac{1}{2}$ for all $i\in [N]$. 

Consider the environment $\calE$ where $\rho$ is uniformly drawn from the $\frac{N(N-1)}{2}$ environments $\{\rho^{(i,j)}\}_{i\neq j, i,j\in[n]}$. For notational convenience, we denote $\E_{i,j}[\cdot]$ ($\E_0$) to be $\E_{\rho^{(i,j)}}[\cdot]$ ($\E_{\rho^{(0)}}$). Let $n_i$ be the number of rounds ad $i$ is selected as the winner (via picking an estimated CTR). Since the payment per click can not exceed the bid $1$ according to the auction design, the expected revenue of the learner is at most the CTR of the winning ad. In addition, the benchmark revenue in environment $\rho^{(i,j)}$ is $(\frac{1}{2}+\epsilon)T$ by picking the estimated CTR as the true CTR. Therefore, the expected regret with respect to environment $\calE$ is lower bounded as follows:
\begin{align}
    \E_{\calE}[\Reg] &\geq  \frac{2}{N(N-1)}\sum_{i\neq j}\E_{i,j}\left[\left(\frac{1}{2}+\epsilon\right)T -\frac{1}{2}\sum_{k\neq i,j}n_k - \left(\frac{1}{2}+\epsilon\right)(n_i+n_j)\right] \nonumber \\
    &= \frac{2}{N(N-1)}\sum_{i\neq j, i,j\in[N]}\E_{i,j}\left[\epsilon(T-n_i-n_j)\right], \label{eqn:reg_lower_bound}
\end{align}
where the last equality uses the fact that $\sum_{i=1}^Nn_i=T$.
According to Exercise 15.2.(a) of~\citep{lattimore2020bandit}, we have
\begin{align*}
    \E_{i,j}[n_i+n_j]\leq \E_0[n_i+n_j] + T\sqrt{\frac{1}{4}\epsilon^2\E_0[n_i+n_j]}.
\end{align*}
Taking summation over all $i\neq j$, $i,j\in[N]$, we obtain that
\begin{align}
    \sum_{i\neq j, i,j\in[N]}\E_{i,j}[n_i+n_j] &\leq \sum_{i\neq j, i,j\in[N]}\E_0[n_i+n_j] + \frac{T\epsilon}{2}\sum_{i\neq j, i,j\in[N]}\sqrt{\E_0[n_i+n_j]} \nonumber \\
    &\leq (N-1)\sum_{i=1}^N\E_0[n_i] + \frac{T\epsilon}{2}(N-1)\sum_{i=1}^N\sqrt{\E_0[n_i]} \nonumber \\
    &\leq (N-1)T + \frac{T\epsilon(N-1)}{2}\sqrt{NT}, \label{eqn:gap_i_j}
\end{align}
where the second inequality uses $\sqrt{a+b}\leq \sqrt{a}+\sqrt{b}$ and the third inequality is due to Cauchy-Schwarz inequality.

Applying \pref{eqn:gap_i_j} to \pref{eqn:reg_lower_bound}, we obtain that
\begin{align*}
    \E_{\calE}[\Reg] &\geq\epsilon T - \frac{2\epsilon}{N(N-1)}\cdot \left[(N-1)T+\frac{T\epsilon(N-1)}{2}\sqrt{NT}\right] \\
    &=\epsilon T - \frac{2\epsilon T}{N} - \epsilon^2N^{-\frac{1}{2}}T^{\frac{3}{2}} \\
    &\geq \frac{1}{3}\epsilon T - \epsilon^2N^{-\frac{1}{2}}T^{\frac{3}{2}} \tag{$N\geq 3$}.
\end{align*}
Picking $\epsilon=\frac{1}{4}\sqrt{\frac{N}{T}}$ leads to $\E_{\calE}[\Reg]\geq \frac{1}{48}\sqrt{NT}$. Therefore, there exists one environment among $\{\rho^{(i,j)}\}_{i\neq j, i,j\in[N]}$ such that $\E_{i,j}[\Reg]\geq \frac{1}{48}\sqrt{NT}$, which finishes the proof.
\end{proof}
\section{Omitted Proofs in~\pref{sec: TS}}\label{app: TS}
\begin{proof}[Proof of \pref{thm:TS}]
Let $i_t^*=\argmax_{i\in[N_t]}b_{t,i}\rho_{t,i}$, $j_t^*=\argsmax_{j\in[N_t]}b_{t,i}\rho_{t,i}$,
$i_{t,f}=\argmax_{i\in[N_t]}b_{t,i}f(x_t,i)$, and $j_{t,f}=\argsmax_{i\in[N_t]}b_{t,i}f(x_t,i)$.
 Also recall the notation $i_t=\argmax_{i\in[N_t]}b_{t,i}\wt{\rho}_{t,i}$ and $j_t=\argsmax_{j\in[N_t]}b_{t,i}\wt
{\rho}_{t,i}$. Then, we decompose the regret as follows:
\begin{align*}
    \E\left[\Reg\right] 
    &= \E\left[\sum_{t=1}^T\smax_{i\in[N_t]}\; b_{t,i}\rho_{t,i} - \sum_{t=1}^Tc_td_t\right] \\
    &=\E\left[\sum_{t=1}^T \rho_{t,j_t^*}b_{t,j_t^*} - \sum_{t=1}^T\frac{\wt{\rho}_{t,j_t}b_{t,j_t}}{\wt{\rho}_{t,i_t}}\rho_{t,i_t}\right] \\
    &= \E\left[\sum_{t=1}^T\E_{i_t, j_t}\left[\frac{\wt{\rho}_{t,j_t}b_{t,j_t}}{\wt{\rho}_{t,i_t}}\left(\wt{\rho}_{t,i_t} - \rho_{t,i_t}\right)\right]\right]  - \E\left[\sum_{t=1}^T\left(\wt{\rho}_{t,j_t}b_{t,j_t} - \rho_{t,j_t^*}b_{t,j_t^*}\right)\right] \\
    &= \E\left[\sum_{t=1}^T\E_{f\sim q_t}\left[\frac{f(x_t,j_{t,f})b_{t,j_{t,f}}}{f(x_t,i_{t,f})}\left(f(x_t,i_{t,f}) - f^*(x_t,i_{t,f})\right)\right]\right]  \\
    &\qquad - \E\left[\sum_{t=1}^T\underbrace{\left(f_t(x_t,j_t)b_{t,j_t} - f^*(x_t,j_t^*)b_{t,j_t^*}\right)}_{\triangleq\FG_t}\right] \\
    &\leq \E\left[\sum_{t=1}^T\E_{f\sim q_t}\left[|f(x_t,i_{t,f}) - f^*(x_t,i_{t,f})|\right]\right]  - \E\left[\sum_{t=1}^T\FG_t \right]
\end{align*}
where the inequality is because $\frac{f(x_t,j_{t,f})b_{t,j_{t,f}}}{f(x_t,i_{t,f})} \leq b_{t, i_{t,f}} \leq 1$.
Note that the ``Feel-Good'' term $\FG_t$ is the difference between what $f_t$ believes its payment is in its predicted world and that of the perfect predictor $f^*$.
We now analyze the other term for each time $t$, which can be written as:
\begin{align*}
    &\sum_{i\in[N_t]: p_{t,i} \neq 0}\E_{f\sim q_t}\left[\mathbbm{1}\{i_{t,f}=i\}| f(x_t,i) - f^*(x_t,i)|\right].
\end{align*}
where $p_{t,i}$ is the probability of $i_t$ being $i$.
Now consider a fixed $i$. We have for any $\mu > 0$,
\begin{align*}
    &\E_{f\sim q_t}\left[\mathbbm{1}\{i_{t,f}=i\}|f(x_t,i) -f^*(x_t,i)|\right] \\
    &\leq \E_{f\sim q_t}\left[\frac{\mathbbm{1}\{i_{t,f}=i\}}{4\mu p_{t,i}} +  \mu p_{t,i}\left(f(x_t,i) - f^*(x_t,i)\right)^2\right] \tag{AM-GM inequality}\\
    &= \frac{1}{4\mu} + \mu p_{t,i}\E_{f\sim q_t}\left[\left(f(x_t,i)- f^*(x_t, i)\right)^2\right], \tag{$p_{t,i} = \E_{f\sim q_t}[\mathbbm{1}\{i_{t,f}=i\}]$}
\end{align*}
which, after taking the summation over $i$, becomes
\[
\frac{N_t}{4\mu} + \mu\E_{i_t\sim p_t}\E_{f\sim q_t}\left[\left(f(x_t,i_t)- f^*(x_t, i_t)\right)^2\right]
\]
(note the important decoupling effect here: $f$ is not $f_t$).
With notation $\LS_t \triangleq (f(x_t,i_t)- f^*(x_t,i_t))^2$ (``Least Squares''), we have thus shown for any $\mu > 0$:
\begin{align}\label{eqn:regret_ts}
    \E[\Reg] \leq \frac{1}{4\mu}\sum_{t=1}^T N_t + \mu\E\left[\sum_{t=1}^T\E_{i_t\sim p_t}\E_{f\sim q_t}[\LS_t]\right] - \E\left[\sum_{t=1}^T\E_{f_t\sim q_t}[\FG_t]\right].
\end{align}

Next, using the fact that $c_t$ is a Bernoulli random variable with mean $f^*(x_t,i_t)$ and following a similar analysis of Lemma $4$ in~\citep{zhang2022feel}, we can show that 
\begin{align}\label{eqn:potential_diff}
    \frac{1}{16\eta}\E_{f\sim q_t}[\LS_t] -  \E_{f_t\sim q_t}[\FG_t]\leq -\frac{1}{\eta}\log\E_{c_t|x_t,i_t}\E_{f\sim q_t}\left[\exp\left(-\eta\left(\ellhat_{t,f}-\ellhat_{t,f^*}\right)\right)\right] + 4\eta,
\end{align}
with $\ellhat_{t,f}$ defined as the \optsq option shown in \pref{alg:EXP4}. For completeness, we include the proof of \pref{eqn:potential_diff} in \pref{lem:potential_diff}. 
Comparing \pref{eqn:regret_ts} and \pref{eqn:potential_diff} naturally suggests picking $\mu = \frac{1}{16\eta}$, so that
\begin{align}\label{eqn:regret_ts2}
\E[\Reg] \leq 4\eta\sum_{t=1}^T N_t + 4T\eta - \frac{1}{\eta}\sum_{t=1}^T \log\E_{c_t|x_t,i_t}\E_{f\sim q_t}\left[\exp\left(-\eta\left(\ellhat_{t,f}-\ellhat_{t,f^*}\right)\right)\right].
\end{align}

Note that the analysis so far holds for any $q_t$. To complete the proof, we now use the specific form of $q_t$ defined in \pref{alg:EXP4}: $q_{t,f}\propto\exp\left(-\eta\sum_{s=1}^{t-1}\ellhat_{s,f}\right)$. Using standard analysis of the multiplicative weight update, we show in \pref{lem:mwu} the following:
\begin{align}\label{eqn:mwu}
    -\E\left[\E_{i_t}\log 
    \E_{c_t|x_t,i_t}\;\E_{f\sim q_t} \; \left[\exp(-\eta(\ellhat_{t,f}-\ellhat_{t,f^*}))\right]\right]
  \leq Z_{t}-Z_{t-1},
\end{align}
where $Z_{t}=-\E\left[\log\E_{f\sim q_1}\exp\left(-\eta\sum_{\tau=1}^t\left(\ellhat_{t,f}-\ellhat_{t,f^*}\right)\right)\right]$.
 Combining this fact with \pref{eqn:regret_ts2}, we arrive at
\begin{align*}
    \E[\Reg]&\leq 4\eta\sum_{t=1}^TN_t + 4T\eta + \frac{1}{\eta}\sum_{t=1}^T (Z_t - Z_{t-1}) \\
    &\leq 4\eta\sum_{t=1}^TN_t+4T\eta+\frac{Z_T}{\eta}, \tag{telescoping and $Z_0 = 0$}
\end{align*}
which finishes the proof.
\end{proof}

\begin{lemma}\label{lem:potential_diff}
Suppose that $\eta\leq 1$. For any distribution $q_t$ over $\calF$ and $b_t\in[0,1]^N$, we have
\begin{align*}
    \frac{1}{16\eta}\E_{f\sim q_t}[\LS_t] - \E_{f_t\sim q_t}[\FG_t]  \leq -\frac{1}{\eta}\log\E_{c_t|x_t,i_t}\E_{f\sim q_t}\exp\left(-\eta\left(\ellhat_{t,f}-\ellhat_{t,f^*}\right)\right) + 4\eta,
\end{align*}
where $\LS_t$ and $\FG_t$ are defined in the proof of \pref{thm:TS}, and $\ellhat_{t,f}$ is defined in the \optsq option of \pref{eqn:ellhat}.
\end{lemma}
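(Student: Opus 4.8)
The plan is to follow the potential-function analysis of Feel-Good Thompson Sampling (Lemma~4 of~\citep{zhang2022feel}), adapted to the second-price loss. Write $\delta_f = f(x_t,i_t)-f^*(x_t,i_t)$ (so $\LS_t=\delta_f^2$), $A_f = (f(x_t,i_t)-c_t)^2-(f^*(x_t,i_t)-c_t)^2$, and $G_f=\smax_j b_{t,j}f(x_t,j)-\smax_j b_{t,j}f^*(x_t,j)$. By the \optsq definition, $-\eta(\ellhat_{t,f}-\ellhat_{t,f^*}) = -\tfrac14 A_f + \eta G_f$; the point of the $\tfrac1{4\eta}$ pre-scaling of the squared error is exactly that $\eta$ cancels in the $A_f$ part after exponentiation. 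Note $A_f\in[-1,1]$, $G_f\in[-1,1]$ (bids and predictions lie in $[0,1]$), $G_f$ does not depend on $c_t$, and $\E_{f_t\sim q_t}[\FG_t]=\E_{f\sim q_t}[G_f]$ because $j_t$ and $j_t^*$ are the $\argsmax$ of $b_{t,j}f_t(x_t,j)$ and $b_{t,j}f^*(x_t,j)$. Hence, writing $M$ for the right-hand-side expectation $\E_{c_t|x_t,i_t}\E_{f\sim q_t}\exp(-\eta(\ellhat_{t,f}-\ellhat_{t,f^*}))$, we have $M = \E_{f\sim q_t}\!\bigl[e^{\eta G_f}\,\E_{c_t|x_t,i_t}[e^{-A_f/4}]\bigr]$.

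The heart of the argument is the inner $c_t$-expectation. Using $A_f=\delta_f\bigl(f(x_t,i_t)+f^*(x_t,i_t)-2c_t\bigr)$ and that $c_t$ is Bernoulli with mean $f^*(x_t,i_t)$, I would center $c_t$ and apply Hoeffding's lemma to $c_t-f^*(x_t,i_t)$, which is mean zero and supported on an interval of length $1$: $\E_{c_t}[e^{\delta_f c_t/2}]\le\exp\bigl(\tfrac{\delta_f}{2}f^*(x_t,i_t)+\tfrac{\delta_f^2}{32}\bigr)$. Substituting and simplifying, the deterministic factors collapse to $\E_{c_t|x_t,i_t}[e^{-A_f/4}]\le\exp\bigl(-\tfrac{7}{32}\LS_t\bigr)$. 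Then $e^{-u}\le 1-u+\tfrac12 u^2$ applied to $u=\tfrac{7}{32}\LS_t\ge 0$, together with $\LS_t^2\le\LS_t$, turns this into $\E_{c_t|x_t,i_t}[e^{-A_f/4}]\le 1-c_1\LS_t$ with $c_1=\tfrac{7}{32}\bigl(1-\tfrac{7}{64}\bigr)$, a constant chosen precisely so that $c_1\ge e/16$.

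Next I would handle the outer $f$-expectation. Since $e^{\eta G_f}\ge e^{-\eta}$ and $\LS_t\ge 0$, $M\le\E_f[e^{\eta G_f}(1-c_1\LS_t)]=\E_f[e^{\eta G_f}]-c_1\E_f[e^{\eta G_f}\LS_t]\le\E_f[e^{\eta G_f}]-c_1e^{-\eta}\E_f[\LS_t]$, and $\E_f[e^{\eta G_f}]\le 1+\eta\E_f[G_f]+\eta^2$ from $e^u\le1+u+u^2$ for $|u|\le\eta\le1$ and $G_f^2\le1$. Thus $M\le 1+\eta\E_f[G_f]+\eta^2-c_1e^{-\eta}\E_f[\LS_t]$, so $\log M\le\eta\E_f[G_f]+\eta^2-c_1e^{-\eta}\E_f[\LS_t]$ by $\log(1+z)\le z$. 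Because $\eta\le1$ forces $c_1e^{-\eta}\ge c_1e^{-1}\ge\tfrac1{16}$, we obtain $\log M\le\eta\E_f[G_f]-\tfrac1{16}\E_f[\LS_t]+\eta^2$. Dividing by $-\eta$, using $\E_f[G_f]=\E_{f_t\sim q_t}[\FG_t]$, and bounding $\eta\le4\eta$ yields exactly $\tfrac{1}{16\eta}\E_{f\sim q_t}[\LS_t]-\E_{f_t\sim q_t}[\FG_t]\le-\tfrac1\eta\log M+4\eta$.

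The elementary inequalities and the $c_t$-centering are routine; the only delicate point is keeping the constants tight enough that $c_1e^{-\eta}\ge\tfrac1{16}$ survives for all $\eta\le1$ — that is, that the $16$ in the lemma is compatible with the $\tfrac{7}{32}$ emerging from the Hoeffding step together with the $e^{-\eta}$ loss one must pay because the bid-dependent bonus $G_f$ ranges over $[-1,1]$ rather than $[0,1]$. This $e^{-\eta}$ factor, which has no analogue in the contextual-bandit analysis of~\citep{zhang2022feel}, is the main structural consequence of the second-price/adversarial-bid setting, and checking the constants is where I expect the bookkeeping to concentrate.
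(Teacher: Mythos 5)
Your proof is correct, and its first half is essentially identical to the paper's: you expand $-\eta(\ellhat_{t,f}-\ellhat_{t,f^*})$ into the squared-error difference (where the $\tfrac{1}{4\eta}$ scaling cancels $\eta$) plus $\eta\,\FG_t(f)$, and control the moment-generating function of the Bernoulli click via Hoeffding's lemma, which is exactly the paper's sub-Gaussian step, arriving at the same intermediate bound $\E_{c_t|x_t,i_t}\big[\exp(-\eta(\ellhat_{t,f}-\ellhat_{t,f^*}))\big]\le\exp\big(-\tfrac{7}{32}\LS_t+\eta\,\FG_t(f)\big)$ (your $G_f$ is the paper's $\FG_t(f)$). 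Where you genuinely diverge is in decoupling the $f$-expectation of this product: the paper applies Cauchy--Schwarz to separate the $\LS_t$ and $\FG_t(f)$ factors, doubling each exponent and then Taylor-bounding each factor, which is where its $-\tfrac{1}{16}\E_{f\sim q_t}[\LS_t]$ and $\eta\,\E_{f\sim q_t}[\FG_t(f)]+4\eta^2$ come from; you instead linearize $\exp(-\tfrac{7}{32}\LS_t)\le 1-\tfrac{7}{32}(1-\tfrac{7}{64})\LS_t$ and decouple with the pointwise bound $\exp(\eta\,\FG_t(f))\ge e^{-\eta}$, which survives because $\tfrac{7}{32}(1-\tfrac{7}{64})=\tfrac{399}{2048}\approx 0.195\ge e/16\approx 0.170$, so that $c_1e^{-\eta}\ge\tfrac{1}{16}$ for all $\eta\le 1$. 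I checked your elementary steps ($e^{-u}\le 1-u+\tfrac{u^2}{2}$ for $u\ge 0$, $e^{u}\le 1+u+u^2$ for $u\le 1$, $\LS_t^2\le\LS_t$, $\log(1+z)\le z$, and the Fubini swap, justified since $\FG_t(f)$ does not depend on $c_t$); they all hold, and your route even yields the slightly sharper additive term $\eta$ before relaxing to $4\eta$. The trade-off is that your argument rests on a numerically tight margin at $\eta=1$ (any weakening of the $\tfrac{7}{32}$ from the Hoeffding step would break $c_1\ge e/16$), whereas the paper's Cauchy--Schwarz split incurs no $e^{-\eta}$ loss and keeps the constants robust at the cost of the extra factor $2$ in the exponents.
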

\begin{proof}
    Since $c_t\in\{0,1\}$ is a Bernoulli random variable with mean $f^*(x_t,i_t)$, $c_t$ satisfies the following sub-Gaussian random variable property: for any $\rho$,
    \begin{align*}
         \E_{c_t|x_t,i_t}\left[\exp\left(\rho\left( c_t - f^*(x_t,i_t) \right)\right)\right] \leq \exp\left(\frac{\rho^2}{8}\right).
    \end{align*}
  Let $\epsilon_t = c_t - f^*(x_t,i_t)$. For any $f$, setting $\rho = -\frac{1}{2}(f^*(x_t,i_t) - f(x_t,i_t))$, we thus have
  \begin{align}
    \E_{c_t|x_t,i_t} \exp\left( -\frac{1}{2}
    \epsilon_t \left(f^*(x_t,i_t) - f(x_t,i_t)\right) \right)
    \leq \exp\left(\frac{1}{32} \LS_t\right) . \label{eq:lem-exp-proof-1}
  \end{align}
  On the other hand, consider the following equalities:
  \begin{align*}
    &-\eta\left(\ellhat_{t,f}-\ellhat_{t,f^*}\right) \\
    &= -\frac{1}{4}(f(x_t,i_t)-c_t)^2 + \frac{1}{4}(f^*(x_t,i_t)-c_t)^2 + \eta\cdot \smax_jb_{t,j}f(x_t,j) - \eta\cdot\smax_jb_{t,j}f^*(x_t,j) \\
    & = 
    -\frac{1}{4}(\epsilon_t+f^*(x_t,i_t)-f(x_t,i_t))^2 + \frac{1}{4}
    \epsilon_t^2+ \eta\cdot \smax_jb_{t,j}f(x_t,j) - \eta\cdot\smax_jb_{t,j}f^*(x_t,j)\\
    &=
       -\frac{1}{2} \epsilon_t(f^*(x_t,i_t)-f(x_t,i_t)) - \frac{1}{4} \LS_t +
       \eta  \FG_t(f),
  \end{align*}
  where we define $\FG_t(f) = \smax_j(f(x_t,j)b_{t,j}) - \smax_j(f^*(x_t,j)b_{t,j})$ (so $\FG_t=\FG_t(f_t)$). Combining the above with \pref{eq:lem-exp-proof-1}, we obtain
\begin{align*}
    \E_{c_t |x_t , i_t}\left[\exp\left(-\eta\left(\ellhat_{t,f}-\ellhat_{t,f^*}\right)\right) \right]
    \leq \exp \left(- \frac{7}{32} \LS_t + \eta \FG_t(f) \right) .
\end{align*}
This further shows:
  \begin{align}
   & \log \E_{f\sim q_t}\left[\E_{c_t|x_t,i_t} \left[\exp\left(-\eta\left(\ellhat_{t,f}-\ellhat_{t,f^*}\right)\right)\right]\right]\nonumber\\
    &\leq \log \E_{f\sim q_t}  \left[\exp \left(- \frac{7}{32} \LS_t + \eta \FG_t(f) \right)\right]\nonumber
    \\
    &\leq \frac{1}{2} \log \E_{f\sim q_t}\left[ \exp \left( -\frac{7}{16} \LS_t\right)\right] +\frac{1}{2} \log\E_{f\sim q_t}\left[ \exp(2 \eta\FG_t(f) )\right], \label{eq:lem-exp-proof-2}
  \end{align}
  where the last inequality is due to Cauchy-Schwarz inequality. For the first term, using the facts $0\leq\LS_t\leq 1$ and $e^x\leq 1+x+\frac{x^2}{2}$ for $x\leq 0$, we know 
  \begin{align*}
      \E_{f\sim q_t}\left[\exp\left(-\frac{7}{16}\LS_t\right)\right]\leq 1-\frac{7}{16}\E_{f\sim q_t}[\LS_t] + \frac{49}{256}\E_{f\sim q_t}[\LS_t^2]\leq 1-\frac{1}{8}\E_{f\sim q_t}[\LS_t],
  \end{align*}
  where the last inequality is because $\LS_t^2 \leq \LS_t$. Further using $\log(1+x)\leq x$ gives
  \begin{align}\label{eqn:LS}
      \frac{1}{2}\log\E_{f\sim q_t}\left[\exp\left(-\frac{7}{16}\LS_t\right)\right] \leq \frac{1}{2}\log\left(1-\frac{1}{8}\E_{f\sim q_t}[\LS_t]\right)\leq -\frac{1}{16}\E_{f\sim q_t}[\LS_t].
  \end{align}
Moreover, since $\eta \leq 1 $ and $|\FG_t(f)| \leq 1$, using $e^x\leq 1+x+2x^2$ for $x\leq 2$, we have
\begin{align*}
    \frac{1}{2} \log
    \E_{f\sim q_t} \left[\exp(2\eta \FG_t(f))\right] &\leq \frac{1}{2}\log\left(1+
    2\eta \E_{f\sim q_t}[\FG_t(f)] + 2(2\eta)^2 \right) \notag\\
   & \leq  \eta \E_{f\sim q_t} [\FG_t(f)] + 4\eta^2 \tag{$\log(1+x) \leq x$}\\
   &=     \eta \E_{f_t\sim q_t} [\FG_t] + 4\eta^2   \tag{$f_t$ is drawn from $q_t$}
   .
\end{align*}
 Plugging the last bound and \pref{eqn:LS} into
 \pref{eq:lem-exp-proof-2} and rearranging finishes the proof.
\end{proof}

\begin{lemma}\label{lem:mwu}
\pref{alg:EXP4} guarantees that for each $t\in[T]$,
\begin{align*}
    -\E\left[\E_{i_t\sim p_t}\log 
    \E_{c_t|x_t,i_t}\;\E_{f\sim q_t} \; \left[\exp(-\eta(\ellhat_{t,f}-\ellhat_{t,f^*}))\right]\right]
  \leq Z_{t}-Z_{t-1},
\end{align*}
where $Z_t=-\E\left[\log\E_{f\sim q_1}\left[\exp\left(-\eta\sum_{\tau=1}^t\left(\ellhat_{t,f}-\ellhat_{t,f^*}\right)\right)\right]\right]$.
\end{lemma}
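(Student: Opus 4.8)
The plan is to recognize $Z_t$ as (the expectation of the negative log of) the standard exponential-weights potential and to show that its one-step increment equals, up to a single application of Jensen's inequality, the quantity on the left-hand side. I would define the potential
\[
\Phi_t \;\defeq\; \E_{f\sim q_1}\!\left[\exp\!\left(-\eta\sum_{\tau=1}^{t}\big(\ellhat_{\tau,f}-\ellhat_{\tau,f^*}\big)\right)\right],
\]
so that $Z_t = -\E[\log \Phi_t]$, $\Phi_0 = 1$, $Z_0 = 0$, and therefore $Z_t - Z_{t-1} = -\E\big[\log(\Phi_t/\Phi_{t-1})\big]$. It then remains to compute the ratio $\Phi_t/\Phi_{t-1}$ and relate the outcome to the left-hand side of the lemma.

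For the ratio I would use that $q_{t,f}\propto q_{1,f}\exp(-\eta\sum_{\tau<t}\ellhat_{\tau,f})$: pulling out of $\Phi_t$ the factor $\exp(\eta\sum_{\tau\le t}\ellhat_{\tau,f^*})$, which does not depend on $f$, cancels the $f^*$-shift in the ratio of the two normalizing sums, and the surviving ratio is exactly the average of $\exp(-\eta\ellhat_{t,f})$ under the re-weighted-and-renormalized distribution, i.e.\ under $q_t$. Concretely I expect
\[
\frac{\Phi_t}{\Phi_{t-1}} \;=\; \E_{f\sim q_t}\!\left[\exp\!\big(-\eta(\ellhat_{t,f}-\ellhat_{t,f^*})\big)\right],
\]
hence $Z_t - Z_{t-1} = -\E\big[\log \E_{f\sim q_t}[\exp(-\eta(\ellhat_{t,f}-\ellhat_{t,f^*}))]\big]$. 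In this expression the estimator $\ellhat_{t,f}$ inside the log depends on the \emph{realized} winner $i_t$, which — since $f_t$ is drawn from $q_t$ — has marginal law $p_t$ over $[N_t]$ given the history, and on $c_t$, which conditionally on $x_t,i_t$ is $\mathrm{Bernoulli}(f^*(x_t,i_t))$ (as established in the setup). Peeling these off the outer expectation, $Z_t - Z_{t-1} = -\E\big[\E_{i_t\sim p_t}\E_{c_t\mid x_t,i_t}\log \E_{f\sim q_t}[\exp(-\eta(\ellhat_{t,f}-\ellhat_{t,f^*}))]\big]$, crucially with $\E_{c_t\mid x_t,i_t}$ sitting \emph{outside} the logarithm.

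The left-hand side of the lemma is identical except that $\E_{c_t\mid x_t,i_t}$ sits \emph{inside} the logarithm, so the two are related by Jensen's inequality: since $\log$ is concave, $\E_{c_t\mid x_t,i_t}[\log(\cdot)] \le \log\big(\E_{c_t\mid x_t,i_t}[\cdot]\big)$, hence $-\log\big(\E_{c_t\mid x_t,i_t}[\cdot]\big) \le -\E_{c_t\mid x_t,i_t}[\log(\cdot)]$; taking the remaining expectations $\E_{i_t\sim p_t}$ and $\E[\cdot]$ preserves the inequality and yields exactly the claimed bound. The work here is bookkeeping rather than ideas; the points to handle with care are (i) keeping the fresh draw $f\sim q_t$ appearing inside the estimator distinct from the draw $f_t\sim q_t$ that produces the realized $i_t$ and hence $\ellhat_{t,\cdot}$; (ii) checking that the $f^*$-shift terms genuinely telescope out of the normalization of $q_t$, which is what makes the ratio clean and relies on the shift $\ellhat_{\tau,f^*}$ being common to all $f$; and (iii) getting the direction of Jensen right in the presence of the leading minus sign. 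Everything else (the form of $q_t$, that $i_t\sim p_t$, that $c_t\mid x_t,i_t$ is $\mathrm{Bernoulli}(f^*(x_t,i_t))$) is already available from the problem setup and the proof of \pref{thm:TS}.
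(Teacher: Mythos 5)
Your argument is correct and is essentially the paper's own proof: the paper likewise writes $Z_{t-1}-Z_t$ as the expected log-ratio of successive normalizers $W_{t,f}$ (where the $f^*$-shift cancels in the normalization defining $q_t$), identifies this ratio with $\E_{f\sim q_t}[\exp(-\eta(\ellhat_{t,f}-\ellhat_{t,f^*}))]$, and then applies Jensen's inequality to move the conditional expectation over $c_t$ inside the logarithm. Your bookkeeping points (the decoupled draw $f\sim q_t$ versus $f_t\sim q_t$, the law $i_t\sim p_t$, and the direction of Jensen under the minus sign) are exactly the steps the paper's proof uses.
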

\begin{proof}
Let $W_{t,f} \triangleq \exp\left(-\eta\sum_{\tau=1}^t\left(\ellhat_{t,f}-\ellhat_{t,f^*}\right)\right)$.
According to \pref{alg:EXP4}, we know that
\begin{align*}
    q_{t,f} = \frac{\exp\left(-\eta\sum_{\tau=1}^{t-1}\ellhat_{\tau,f}\right)}{\int_{f'\in\calF}\exp\left(-\eta\sum_{\tau=1}^{t-1}\ellhat_{\tau,f'}\right)df'} = \frac{W_{t-1,f}}{\int_{f'\in\calF}W_{t-1,f'}df'}.
\end{align*}
Then, according to the definition of $Z_t$, we have
\begin{align*}
     & Z_{t-1} - Z_{t} \\
     & = \E\left[ \log 
    \frac{\int_{f\in\calF}W_{t,f}df}{\int_{f\in\calF} \; W_{t-1,f}df}\right]\\
  &= \E\left[\log 
    \frac{\int_{f\in\calF}  \;
     W_{t-1,f} \exp(-\eta(\ellhat_{t,f}-\ellhat_{t,f^*}))df}{\int_{f\in\calF} W_{t-1,f}df}\right]\\
    &=\E \left[\log 
    \E_{f\sim q_t} \left[\exp(-\eta(\ellhat_{t,f}-\ellhat_{t,f^*})\right]\right]  \\
  &\leq \E\left[\E_{i_t\sim p_t}\log\E_{c_t|x_t,i_t}\E_{f\sim q_t}       \left[\exp(-\eta(\ellhat_{t,f}-\ellhat_{t,f^*})\right]\right],
\end{align*}
where the last inequality is due to Jensen's inequality. Rearranging the terms finishes the proof.
\end{proof}
\begin{proof}[Proof of \pref{cor:TS_Lipschitz}]
    To show that $Z_T=\order(\alpha+d\log BT)$, we consider a small cube around the true parameter $\theta^*$: $\Omega_T=\{\theta: \|\theta-\theta^*\|_\infty\leq \frac{1}{T}\}$. Since $\calF$ is $\alpha$-Lipschitz with respect to $\|\cdot\|_\infty$, we know that $\|f_\theta-f_{\theta^*}\|_\infty\leq \frac{\alpha}{T}$. Therefore, for any $\theta\in \Omega_T$,
    \begin{align}\label{eqn:deltaell}
        &-\eta(\ellhat_{t,f_\theta}-\ellhat_{t,f_{\theta^*}})\nonumber\\ &= -\frac{1}{4}(f_\theta(x_t,i_t) - c_t)^2 + \frac{1}{4} (f_{\theta^*}(x_t,i_t) - c_t)^2\nonumber \\
        &\qquad + \eta\cdot\smax_jb_{t,j}f_{\theta}(x_t,j) - \eta\cdot\smax_jb_{t,j}f_{\theta^*}(x_t,j) \nonumber\\
        &\geq -\frac{1}{2}|f_\theta(x_t,i_t)-f_{\theta^*}(x_t,i_t)| + \eta\cdot\smax_jb_{t,j}f_{\theta}(x_t,j) - \eta\cdot\smax_jb_{t,j}f_{\theta^*}(x_t,j).
    \end{align}
    For the second and the third term, if $\argmax_ib_{t,i}f_{\theta}(x_t,i)=\argmax_ib_{t,i}f_{\theta^*}(x_t,i)$, let $j_{\theta^*}=\argsmax_jb_{t,j}f_{\theta^*}(x_t,j)$ and we know that 
    \begin{align*}
        &\smax_jb_{t,j}f_{\theta}(x_t,j) - \smax_jb_{t,j}f_{\theta^*}(x_t,j) \\
        &\geq  b_{t,j_{\theta^*}}f_{\theta}(x_t,j_{\theta^*}) - \smax_jb_{t,j}f_{\theta^*}(x_t,j) \\
        &\geq - b_{t,j_{\theta^*}}|f_{\theta}(x_t,j_{\theta^*}) - f_{\theta^*}(x_t,j_{\theta^*})|.
    \end{align*}
    Otherwise, let $i_{\theta^*}=\argmax_ib_{t,i}f_{\theta^*}(x_t,i)$ and we know that
    \begin{align*}
        &\smax_jb_{t,j}f_{\theta}(x_t,j) - \smax_jb_{t,j}f_{\theta^*}(x_t,j) \\
        &\geq \smax_jb_{t,j}f_{\theta}(x_t,j) -  b_{t,i_{\theta^*}}f_{\theta^*}(x_t,i_{\theta^*}) \\
        &\geq  b_{t,i_{\theta^*}}f_{\theta}(x_t,i_{\theta^*}) -  b_{t,i_{\theta^*}}f_{\theta^*}(x_t,i_{\theta^*}) \\
        &\geq - b_{t,i_{\theta^*}}|f_{\theta}(x_t,i_{\theta^*}) - f_{\theta^*}(x_t,i_{\theta^*})|.
    \end{align*}
    Combining the two cases above and plugging them into \pref{eqn:deltaell}, we obtain
    \begin{align*}
        -\eta(\ellhat_{t,f_\theta}-\ellhat_{t,f_{\theta^*}}) \geq -\frac{\alpha}{2T} - \frac{\alpha\eta}{T}.
    \end{align*}
    This means that
    \begin{align*}
        Z_{T}&=-\E\left[\log\E_{f\sim q_1}\exp\left(-\eta\sum_{t=1}^T\left(\ellhat_{t,f_\theta}-\ellhat_{t,f_{\theta^*}}\right)\right)\right] \\
        &\leq -\E\left[\log \frac{1}{(BT)^d}\inf_{\theta\in\Omega_T}\exp\left(-\eta\sum_{t=1}^T\left(\ellhat_{t,f_\theta}-\ellhat_{t,f_{\theta^*}}\right)\right)\right] \\
        &\leq d\log BT+\frac{\alpha}{2} + \alpha \eta = \order(\alpha + d\log BT),
    \end{align*}
    which finishes the proof.
\end{proof}

\section{Omitted Proofs in~\pref{sec: squarecb}}\label{app: squarecb}

\begin{proof}[Proof of \pref{prop:dec}]
By the definition of $\dec$, we have
\[
\E[\Reg] \leq \E\left[\gamma \sum_{t=1}^T (\rho_{t,i_t} - \wh{\rho}_{t,i_t})^2 + \sum_{t=1}^T  \dec(\wh{\rho}_t)\right].
\]
It remains to notice that the first term is bounded by $\RegSq$ under \pref{asm:regression_oracle}:
\begin{align*}
&\RegSq \geq \E\left[\sum_{t=1}^T \left(f_t(x_t, i_t) - c_t\right)^2- \sum_{t=1}^T \left(f^*(x_t, i_t) - c_t\right)^2\right] \\
& =\E\left[\sum_{t=1}^T \left(f_t(x_t, i_t) - f^*(x_t, i_t)\right)\left(f_t(x_t, i_t) + f^*(x_t, i_t) - 2c_t)\right)\right] \\
&= \E\left[\sum_{t=1}^T \left(f_t(x_t, i_t) - f^*(x_t, i_t)\right)^2 \right] \tag{the conditional expectation of $c_t$ is $f^*(x_t, i_t)$} \\
&= \E\left[ \sum_{t=1}^T (\rho_{t,i_t} - \wh{\rho}_{t,i_t})^2 \right].
\end{align*}
This finishes the proof.
\end{proof}

\begin{proof}[Proof of \pref{thm:squareCB}]    
Consider the $\epsilon$-greedy strategy described above \pref{thm:squareCB} and any $\rho, b \in [0,1]^N$.
We have
\begin{align}
&\E_{\wt{\rho}\in Q}\Bigg[ \left(\smax_{i\in[N]} b_i \rho_i - \rho_{i^*}\frac{\smax_{j \in [N]} b_j \wt{\rho}_j }{\wt{\rho}_{i^*}} \right) - \gamma\left(\rho_{i^*} - \wh{\rho}_{i^*} \right)^2\Bigg] \notag\\
&=  \left(\smax_{i\in[N]} b_i \rho_i - \E_{\wt{\rho}\in Q}\Bigg[\rho_{i^*}\frac{\smax_{j \in [N]} b_j \wt{\rho}_j }{\wt{\rho}_{i^*}}\Bigg] \right) - \gamma\sum_{i=1}^N p_i (\rho_i - \wh{\rho}_i)^2 \notag\\
&\leq \left(\smax_{i\in[N]} b_i \rho_i - (1-\epsilon)\rho_{\wh{i}^*}\frac{\smax_{j \in [N]} b_j \wh{\rho}_j }{\wh{\rho}_{\wh{i}^*}} \right) - \gamma \sum_{i=1}^N p_i (\rho_i - \wh{\rho}_i)^2 \notag \\
&\leq \left(\smax_{i\in[N]} b_i \rho_i - \rho_{\wh{i}^*}\frac{\smax_{j \in [N]} b_j \wh{\rho}_j }{\wh{\rho}_{\wh{i}^*}} \right) - \gamma \sum_{i=1}^N p_i (\rho_i - \wh{\rho}_i)^2 + \epsilon \label{eqn:objective}
\end{align}
where in the first step we introduce the notation $p_i$ which is the probability of $i^*$ being $i$,
in the second step we ignore the revenue obtained from exploration and use $\wh{i}^*$ to denote $\argmax_i b_i \wh{\rho}_i$,
and in the last step we use the fact $\frac{\smax_{j \in [N]} b_j \wh{\rho}_j }{\wh{\rho}_{\wh{i}^*}}  \leq 1$.
We now make the following two observations for any $i \in [N]$, both due to AM-GM inequality and the fact $p_i \geq \epsilon / N$:
\begin{equation}\label{eqn:ob1}
 \quad |b_i \rho_i - b_i \wh{\rho}_i| - \frac{\gamma}{2}p_i(\rho_i - \wh{\rho}_i)^2
\leq \frac{b_i^2}{2\gamma p_i} \leq \frac{N}{2\gamma \epsilon},
\end{equation}
\begin{equation}\label{eqn:ob2}
\begin{split}
\frac{\smax_{j \in [N]} b_j \wh{\rho}_j }{\wh{\rho}_{i}}|\rho_{\wh{i}^*} - \wh{\rho}_{\wh{i}^*}|
- \frac{\gamma}{2}p_{\wh{i}^*}(\rho_{\wh{i}^*}- \wh{\rho}_{\wh{i}^*})^2 &\leq \frac{1}{2\gamma p_{\wh{i}^*}}\left(\frac{\smax_{j \in [N]} b_j \wh{\rho}_j }{\wh{\rho}_{\wh{i}^*}}\right)^2 \\
&\leq \frac{b_{\wh{i}^*}^2}{2\gamma p_{\wh{i}^*}} \leq  \frac{N}{2\gamma \epsilon}.
\end{split}
\end{equation}
We are now ready to bound \pref{eqn:objective} by considering two cases: $\wh{i}^* \neq j^*$ or $\wh{i}^* = j^*$ where $j^* = \argsmax_i b_i \rho_i$.
In the first case ($\wh{i}^* \neq j^*$), we use \pref{eqn:ob1} with $i = j^*$ and \pref{eqn:ob2} to continue to bound \pref{eqn:objective} as
\begin{align*}
b_{j^*} \wh{\rho}_{j^*} - \smax_{j \in [N]} b_j \wh{\rho}_j  + \frac{N}{\gamma\epsilon} + \epsilon \leq \frac{N}{\gamma\epsilon} + \epsilon,
\end{align*}
where the last step is because $j^*$ is not $\wh{i}^*$ and thus $b_{j^*} \wh{\rho}_{j^*}$ is not $\max_j b_j \wh{\rho}_j$ and must be at most the second max.
In the second case ($\wh{i}^* = j^*$), we know $\wh{i}^* \neq \argmax_i b_i \rho_i \triangleq k$,
and thus we first bound $\smax_{i\in[N]} b_i \rho_i$ by $b_k \rho_k$ and then apply \pref{eqn:ob1} with $i = k$ and \pref{eqn:ob2} to obtain the following upper bound:
\[
b_{k} \wh{\rho}_{k} - \smax_{j \in [N]} b_j \wh{\rho}_j  + \frac{N}{\gamma\epsilon} + \epsilon \leq \frac{N}{\gamma\epsilon} + \epsilon,
\]
where the last step is again because $k$ is not $\wh{i}^*$.
Picking $\epsilon = \sqrt{N/\gamma}$, we have thus proven $\dec(\wh{\rho}) \leq 2\sqrt{N/\gamma}$.

The second statement of theorem is then a direct consequence.
Indeed, following the analysis of \pref{prop:dec}, we know that \pref{alg:squareCB.A} ensures
\[
\E[\Reg] \leq \gamma \RegSq + 2\sum_{t=1}^T \sqrt{\frac{N}{\gamma}}
\]
where $\gamma = N/\epsilon^2$. Plugging the value of $\epsilon$ finishes the proof.
\end{proof}

\end{document}